\newtheorem{defn}{Definition}
\newtheorem{definition}[defn]{Definition}
\newtheorem{example}{Example}
\newtheorem{proposition}{Proposition}
\newtheorem{remark}{Remark}
\newcommand{\GMax}{\mathrm{GMax}}
\newcommand{\nop}[1]{}
\title{Belief merging within
fragments of propositional logic}
\author{Nadia Creignou \and Odile Papini\\
Aix Marseille Universit\'e, CNRS\\
\And
Stefan R\"ummele \and Stefan Woltran\\
Vienna University of Technology}
\begin{document}
\nocopyright
\maketitle

\begin{abstract}
Recently, belief change within the framework of fragments of propositional logic has gained increasing attention. Previous works focused on belief contraction and belief revision on the Horn fragment. However, the problem of belief merging within
fragments of propositional logic has been neglected so far. This paper presents a general approach to define new merging operators derived from existing ones such that the result of merging remains in the fragment under consideration. Our approach is not limited to the case of Horn fragment but applicable to any fragment of propositional logic characterized by a closure property on the sets of models of its formul\ae. We study the logical properties of the proposed operators in terms of satisfaction of merging postulates, considering in particular  distance-based merging operators for Horn and Krom fragments.%
\end{abstract}

\section{Introduction}
\label{sec:intro}

Belief merging consists in achieving a synthesis between pieces of information
provided by different sources. Although these sources are individually consistent, they may mutually conflict. The aim of merging is to provide a consistent set of information, making maximum use of the information provided by the sources while not favoring any of them. Belief merging is an important issue in many fields of Artificial Intelligence (AI) \cite{IJIS-01} and symbolic approaches to multi-source fusion gave rise to increasing interest within the AI community since the $1990$s \cite{BKM91,LC98,Lin96,Rev93,Rev97}. %
One of today's major approaches is the problem of merging under (integrity) constraints %
in order to generalize both merging (without constraints) and revision (of old information by a new piece of information).
For the latter the constraints then play the role of the new piece of information. 
Postulates characterizing the rational behavior of such merging operators, known as IC postulates, have been proposed by Konieczny and Pino P{\'e}rez~\cite{KP02-JLC} in the same spirit as the seminal AGM \cite{AGM} postulates for revision. %
Concrete merging operators have been proposed according to either semantic (model-based) or syntactic (formula-based) points of view in a classical logic setting \cite{ChaconP12}. %
We focus here on 
the model-based approach of distance-based merging operators \cite{KLM04,KP02-JLC,Rev97}.
These operators are
parametrized by a distance which represents the closeness between interpretations and an aggregation function which captures the merging strategy and takes the origin of beliefs into account. 

Belief change operations within the framework of fragments of classical logic constitute a vivid research branch. In particular, contraction \cite{BMVW11,DW13,ZP12} and revision \cite{DP11,Putte13,ZPZ13} have been thoroughly analyzed in the literature.
The study of belief change within language fragments is motivated by two central observations:
\begin{itemize}
\item
In many applications, the language is restricted a priori. For instance, a rule-based formalization of expert's knowledge is much easier to handle for
standard users. In case users want to revise or merge some sets of rules, they indeed expect that the outcome is still in the easy-to-read format they are used to.
\item 
Many fragments of propositional logic allow for efficient reasoning methods. Suppose an agent  has to make a decision according to a group of experts' beliefs. This should be done efficiently, therefore the expert's beliefs are stored as formul\ae\ known to be in a tractable class. For making a decision, it is desired that the result of the change operation %
yields a set of formul\ae\  in the same fragment. Hence, the agent still can use the dedicated solving method she is equipped with for this
  fragment. 
\end{itemize}

Most of previous work has focused on the Horn fragment except~\cite{CPPW12} that studied revision in any fragment of propositional logic. However, as far as we know, the problem of {\em belief merging} within %
{\em fragments of propositional logic} has been neglected so far. 

The main obstacle hereby is that for a language fragment $\L'$, given $n$ belief bases $K_1, \ldots, K_n \in 2^{\L'}$ and a constraint $\mu \in \L'$, there is no guarantee that the outcome of the merging, 
$\Delta_{\mu}(\{K_1, \ldots, K_n\})$, remains in $\L'$ as well. Let for example, 
$K_1 = \{a\}$,  $K_2 = \{b\}$ and $\mu =\neg a \vee \neg b$ be two sets of  formul\ae\  and a formula expressed in the Horn fragment. Merging with typical distance-based operator proposed in \cite{KP02-JLC} does not remain in the Horn language fragment since 
the result of merging
is equivalent to 
$( a \vee b) \wedge (\neg a \vee \neg b)$, which is not equivalent to any Horn formula (see \cite{Schaefer78}).

We propose the concept of 
\emph{refinement} to overcome these problems.
Refinements have been proposed for revision in \cite{CPPW12} %
and capture the intuition of adapting a given operator (defined for full classical logic) in order to become applicable within a fragment. The basic properties of a refinement aim to (i)  guarantee the result of the change operation to be in the same fragment as the belief change scenario given and (ii)  keep the behavior of the original operator unchanged in case it delivers a result which already fits in the fragment.

Refinements are interesting from different points of view. Several fragments can be treated in a uniform way and a general characterization of refinements is provided for any fragment. Defining and studying refinements of merging operators is not a straightforward extension of the revision case. It is   more complex due to the nature of the merging operators. Even if the constraints play the role of the new piece of information in revision, model-based merging deals with multi-sets of models. Moreover applying this approach to different distance-based merging operators, each parameterized by a  distance and an aggregation function, reveals that all the different parameters matter, thus showing a rich variety of behaviors for refined merging operators.

The main contributions of this paper are the following:
\begin{itemize}
\item We propose to adapt known 
belief merging operators to make them applicable in 
fragments of propositional logic.
We provide natural criteria, which  refined operators should satisfy. We characterize  refined operators in a constructive way.
\item This characterization 
 allows us to study their properties in terms of
the IC postulates \cite{KP02-JLC}. On one hand we prove that the basic postulates (IC0--IC3) are preserved for any refinement for any fragment.
On the other hand we show that the situation is more complex for the remaining postulates. We provide detailed results for the 
Horn and the Krom fragment in terms of 
two kinds of distance-based merging operators  and
three approaches for refinements. 
\end{itemize}

\section{Preliminaries}
\label{sec:prel}

\paragraph{Propositional Logic.}
We consider $\L$ as the language of propositional logic
over some fixed alphabet $\U$ of propositional atoms. 
A literal is an atom or its negation.
A clause is a disjunction of literals. 
A clause is called \emph{Horn} if at most one of its literals is positive; and
\emph{Krom} if it consists of at most two literals.
We identify the following subsets of~$\L$:
$\Lhorn$ is the set of all formul\ae\  in $\L$ being conjunctions of Horn
clauses, and $\Lkrom$ is the set of all formul\ae\  in $\L$ being conjunctions of Krom
clauses.
In what follows we sometimes just talk about arbitrary fragments $\L'\subseteq
\L$.
Hereby, we tacitly assume that any such fragment $\L'\subseteq \L$ contains at
least the formula $\top$.

An interpretation
is represented either by
a set $\omega\subseteq \U$ of atoms (corresponding to the variables set to true) 
or by its
corresponding characteristic bit-vector of length $|\U|$. 
For instance if we consider $\U=\{x_1,\ldots,x_6\}$,
the interpretation $x_1=x_3=x_6=1$ and $x_2=x_4=x_5=0$ will be represented
either by $\{x_1, x_3, x_6\}$ or by $(1,0,1,0,0,1)$.
As usual, if an interpretation $\omega$ satisfies a formula $\phi$, 
we call $\omega$ a model of $\phi$. By $\mod(\phi)$ we denote the set 
of all models (over $\U$) of $\phi$. 
Moreover, $\psi \models \phi$ if $\mod(\psi)\subseteq\mod(\phi)$ 
and $\psi\equiv \phi$ ($\phi$ and $\psi$ are equivalent) if $\mod(\psi)=\mod(\phi)$.

A \emph{base} $K$ is a finite set of propositional formul\ae\ 
$\{\varphi_1,\ldots , \varphi_n\}$.
We shall often identify $K$ via $\bigwedge K$, the conjunction
of formul\ae\ of $K$, i.e., $\bigwedge  K=\varphi_1\land\cdots \land \varphi_n$.
Thus, a base $K$ is said to be consistent if $\bigwedge  K$ is consistent,
$\mod(K)$ is a shortcut for $\mod(\bigwedge K)$,
$K\models \phi$ stands for $\bigwedge K\models \phi$, etc.
Given
$\L'\subseteq\L$ we denote by $\calK_{\L'}$ the set of bases restricted to 
formul\ae\  from $\L'$.
For fragments $\L'\subseteq \L$, we also 
use $T_{{\mathcal L}'}(K) = \{ \phi  \in \L'
\mid K \models \phi\}$.

A \emph{profile} $E$ is a non-empty finite multiset of consistent bases $E=\{K_1,\ldots,
K_n\}$ and represents a group of $n$ agents having different beliefs. 
Given
$\L'\subseteq \L$, 
we denote by $\calE_{\L'}$ the set of profiles restricted to the use of
formul\ae\ from $\L'$.
We denote $\bigwedge K_1\land \ldots\land \bigwedge K_n$ by $\bigwedge E$.
The profile is said to be
consistent if $\bigwedge  E$ is consistent.
 By abuse of notation we write $K\sqcup E$ to denote the multi-set union
$\{K\}\sqcup E$. The multi-set consisting of   the sets of models of the bases in a profile is denoted
$\mmod(E)=\{\mod(K_1),\ldots, \mod(K_n)\}$.
Two profiles $E_1$ and $E_2$ are equivalent, denoted by $E_1\equiv E_2$ if $\mmod(E_1)=\mmod(E_2)$.
Finally, for a set of interpretations $\M$ and a profile $E$ we define 
$\cardintersection(\M,E) = | \{ i : \M \cap\mod(K_i) \not= \emptyset \}|$.

\paragraph{Characterizable Fragments of Propositional Logic.}

Let $\B$ denote the set of all Boolean functions $\op\colon\{0,1\}^k\rightarrow
\{0,1\}$  that have
the following two properties\footnote{these properties are also known as anonimity and unanimity.}:
\begin{itemize}
\item 
\textit{symmetry}, \ie, for all permutations
$\sigma$, $\op(x_1,\ldots, x_k)=\op(x_{\sigma(1)},\ldots, x_{\sigma(k)})$ and
\item 
$0$- and $1$-\textit{reproduction}, \ie,  for all $x\in\{0,1\}$,
$\op(x,\ldots, x)=x$. 
\end{itemize}

Examples are the binary AND function denoted by $\land$ or the ternary MAJORITY function, $\maj_3(x,y,z)=1$ if at least two of the
variables $x,y$, and $z$ are set to 1.
We extend Boolean functions to interpretations by applying
coordinate-wise the original function (recall that we consider interpretations also as bit-vectors). 
So, if $M_1,\ldots, M_k\in\{0,1\}^n$, then
$\op(M_1,\ldots, M_k)$ is defined by
$(\op(M_1[1],\ldots, M_k[1]),\ldots, \op(M_1[n],\ldots,
M_k[n]))$, where $M[i]$ is the $i$-th coordinate of the interpretation $M$.

\begin{definition}
Given a  set ${\cal M}\subseteq 2^\U$ of interpretations and
$\op\in\B$, 
we define $\cl{\op}(\calM)$, the
\emph{closure} of ${\cal M}$ under $\op$, as the smallest set  of interpretations that contains  ${\cal M}$ and that is closed under $\op$, i.e., 
if $M_1,\ldots, M_k\in \cl{\op}(\calM)$, then also $\op(M_1,\ldots,
M_k)\in\cl{\op}(\calM)$.
\end{definition}

\noindent
Let us mention some easy properties of such a  closure:
(i) monotonicity;
(ii) if $\vert \calM\vert =1$, then $\cl{\op}(\calM)=\calM$;
(iii) $\cl{\op}(\emptyset)=\emptyset$.

\begin{definition}
\label{def:fragment}
Let $\op\in\B$.
A set $\L'\subseteq\L$ of propositional formul\ae\ is a  \emph{$\op$-\fragment} (or \emph{characterizable fragment}) if:
\begin{enumerate}
\item for all $\psi \in \L'$, $\mod(\psi) = \cl{\op}(\mod(\psi))$
\item for all $\M\subseteq 2^\U$ with $\M= \cl{\op}(\M)$ there
exists a $\psi \in \L'$  with $\mod(\psi)= \M$
\item if $\phi, \psi \in \L'$ then $\phi \wedge \psi \in \L'$.
\end{enumerate} 
\end{definition}

It is well-known that $\Lhorn$ is an $\land$-fragment and $\Lkrom$ is a $\maj_3$-fragment (see e.g. \cite{Schaefer78}).

\paragraph{Logical Merging Operators.}
Belief merging aims at combining several pieces of information coming from
different sources. Merging operators we consider are functions from the set of
profiles and the set of propositional formul\ae\ to the set of bases, i.e.,
$\Delta\colon \calE_\L\times\L\rightarrow \calK_\L$. For $E\in\calE_\L$ and
$\mu\in\L$ we will write $\Delta_\mu(E)$ instead of $\Delta(E,\mu)$; the
formula $\mu$ is referred to as the \emph{integrity constraint} (IC) and
restricts the result of the merging.

As for
belief revision some logical properties that one could expect from any
reasonable merging operator have been stated.
See \cite{KP02-JLC} for a detailed discussion.
Intuitively $\Delta_\mu(E)$ is
the ``closest'' belief base to the profile $E$ satisfying the integrity constraint
$\mu$. 
This is what the following postulates try to capture. 

{\setlength{\tabcolsep}{4pt}
\begin{tabular}{ll}
$(\ic 0)$  &  $\Delta_\mu(E)\models \mu$ \\
$(\ic 1)$  &  If $\mu$ is consistent, then $\Delta_\mu(E)$ is consistent \\
$(\ic 2)$  &  If $\bigwedge E$ is consistent with $\mu$,
\\ \ ~  & 
then
$\Delta_\mu(E)=\bigwedge E\land\mu$ \\
$(\ic 3)$  & If $E_1\equiv E_2$ and $\mu_1\equiv \mu_2$, 
\\ \ ~  & 
then
$\Delta_{\mu_1}(E_1)\equiv  \Delta_{\mu_2}(E_2)$\\
$(\ic 4)$  &  If $K_1\models \mu$ and   $K_2\models \mu$, then\\  
\ ~  & 
$\Delta_\mu(\{K_1, K_2\})\land K_1$ is consistent if and only if\\
\ ~  &  
$\Delta_\mu(\{K_1, K_2\})\land K_2$ is consistent\\
$(\ic 5)$  &  $\Delta_\mu(E_1)\land \Delta_\mu(E_2)\models \Delta_\mu(E_1\sqcup
E_2)$ \\
$(\ic 6)$  &  If $\Delta_\mu(E_1)\land \Delta_\mu(E_2)$ is consistent, \\ 
\ ~  & then
$\Delta_\mu(E_1\sqcup E_2)\models \Delta_\mu(E_1)\land \Delta_\mu(E_2)$ \\
$(\ic 7)$  &  $\Delta_{\mu_1}(E)\land \mu_2\models \Delta_{\mu_1\land \mu_2}(E)$
\\
$(\ic 8)$  &  If $\Delta_{\mu_1}(E)\land \mu_2$ is consistent, \\ 
\ ~  & then 
$\Delta_{\mu_1\land \mu_2}(E)\models \Delta_{\mu_1}(E)$
\end{tabular}}\\

Similarly to belief revision, a representation theorem \cite{KP02-JLC} shows that a merging operator corresponds to a family of total preorders over interpretations. More formally, for $E\in\calE_\L$, $\mu\in\L$ and $\leq_E$ a total preorder over interpretations, a model-based operator is defined by $\mod(\Delta_{\mu}(E)) = min(\mod(\mu), \leq_E)$. The model-based merging operators select interpretations that are the ''closest'' to the original belief bases.
 
Distance-based operators 
where the notion of closeness stems from
the definition of a distance (or a pseudo-distance\footnote{Let $\omega, \omega' \in \mathcal{W}$, a pseudo-distance is such that $d(\omega, \omega')= d(\omega', \omega)$ and $d(\omega, \omega') = 0$ if and only if $\omega = \omega'$.}) between interpretations and from an aggregation function have been proposed in \cite{KP02-JLC,KP11}. An aggregation function $f$ is a function mapping for any positive integer $n$ each $n$-tuple of positive reals into a positive real such that for any $x_1,\ldots, x_n,x, y\in R^+$, if $x\le y$, then $f(x_1,\ldots, x,\ldots, x_n)\le f(x_1,\ldots, y,\ldots, x_n)$, $f(x_1,\ldots, x_n)=0$ if and only if $x_1=\ldots = x_n=0$ and $f(x)=x$.
Let $E =\{K_1, \ldots, K_n \} \in \calE_\L$, $\mu \in \L$, $d$ be a distance and $f$ be an aggregation function, we consider the family of $\Delta^{d,f}_\mu$ merging operators defined by $\mod(\Delta^{d,f}_\mu(E)) = min(\mod(\mu), \leq_E)$ where $\leq_E$ is a total preorder over the set $2^{\cal U}$ of interpretations defined as follows:
\begin{itemize}
\item $d(\omega,K_i)= min_{\omega' \models K_i}d(\omega,\omega')$,
\item $d(\omega,E) = f(d(\omega,K_1), \ldots, d(\omega,K_n))$, and
\item $\omega \leq_E \omega'$ if $d(\omega,E) \leq d(\omega',E)$. 
\end{itemize}

\begin{definition}\label{def:counting_distance}
 A \emph{counting distance} between interpretations is a function $d: 2^\U \times 2^\U \rightarrow \mathbb{R}^+$ defined for every pair of interpretations $(\omega,\omega')$ by 
$d(\omega,\omega') = g(\card{(\omega\setminus\omega')\cup(\omega'\setminus\omega)}),$
where  $g: \mathbb{N} \rightarrow \mathbb{R}^+$ is a nondecreasing function such that $g(n)=0$ if and only if $n=0$. 
If $g(n)=g(1)$ for every $n\neq 0$,
we call $d$ a
\emph{drastic distance} and denote it via $d_D$.
If $g(n)=n$ for all $n$, 
we call $d$ the
\emph{Hamming distance} and denote it via $d_H$.
If for every interpretations $w,$ $w'$ and $w''$ we have $d(w,w')\le d(w, w'')+d(w'',w')$, then we say that the distance $d$ satisfies the triangular inequality.
 \end{definition}

Observe that  a counting distance is indeed a pseudo-distance, and both, the Hamming distance and drastic distance satisfy the triangular inequality.

As aggregation functions, we consider here $\Sigma$, the sum aggregation function,
and 
the  aggregation function $\GMax$ defined as follows. Let $E = \{K_1, \ldots, K_n \}\in \calE_\L$ and $\omega$, $\omega'$ be two interpretations. Let 
$(d^{\omega}_1, \ldots, d^{\omega}_n )$, where $d^{\omega}_j = d_H(\omega, K_j)$, be the vector of distances between $\omega$ and the $n$ belief bases in $E$. Let $L^E_{\omega}$ be the vector obtained from $(d^{\omega}_1, \ldots, d^{\omega}_n)$ by ranking it in decreasing order. The aggregation function $\GMax$ is defined by $\GMax(d^{\omega}_1, \ldots, d^{\omega}_n)=L^E_{\omega}$, with   $\GMax(d^{\omega}_1, \ldots, d^{\omega}_n) \leq 
\GMax(d^{\omega'}_1, \ldots, d^{\omega'}_n)$ if $L^E_{\omega} \leq_{lex} L^E_{\omega'}$, where $\leq_{lex}$ denotes the lexicographical ordering. 

In this paper we focus on the $\ddelta{\Sigma}$ and $\ddelta{\GMax}$ operators where $d$ is an arbitrary counting distance. 
These operators are known to satisfy 
the postulates $(\ic 0)$--$(\ic 8)$, as shown in \cite{KLM04} generalizing 
more specific results from 
\cite{KP02-JLC,LM98}. 
Finally, we define certain concepts for merging operators and fragments.

\begin{definition}
A \emph{basic} (merging) operator for $\L'\subseteq \L$ is any 
function $\Delta:\calE_{\L'}\times\L'\rightarrow \calK_{\L'}$ satisfying 
$ \mod(\Delta_\mu(\{\{\top\}\}))=\mod(\mu)$ for each $\mu \in \L'$. 
We say that $\Delta$ \emph{satisfies} an $(\ic)$ postulate $(\ic_i)$
($i\in\{0,\ldots,8\}$)
\emph{in $\L'$} if the respective postulate holds when restricted to formul\ae\ 
from $\L'$.
\end{definition}

\section{Refined Operators}\label{sec:approach}

Let us consider a simple example to illustrate the problem 
of standard operators 
when applied within a fragment of propositional 
logic.

\begin{example}
\label{ex:1} 
Let $\U = \{a,b \}$, $E= \{K_1,K_2\} \in \calE_{\Lhorn}$ and $\mu \in \Lhorn$ such that
$\mod(K_1) = \{\{a\},\{a,b\}\}$, $\mod(K_2) = \{\{b\},\{a,b\}\}$,
and $\mod(\mu) = \{\emptyset,\{a\},\{b\}\}$. Consider the distance-based merging operators, $\dHdelta{\Sigma}$ and $\dHdelta{\GMax}$. The following table gives the distances between the interpretations of $\mu$ and the belief bases, and the result of the aggregation functions $\Sigma$ and $\GMax$.

\begin{table}[h]
\centering
\begin{tabular}{c | c | c | c| c}
 $2^{\U}$ & $K_1$ & $K_2$ & $\Sigma$ &$\GMax$\\
  \hline
  $\emptyset$ & $1$ & $1$ & $2$& $ (1, 1)$\\
  $\{a\}$ & $0$ & $1$ & $1$& $(1, 0)$\\
  $\{b\}$ & $1$ & $0$ & $1$& $ (1 , 0)$\\
  \hline
\end{tabular}
\end{table} 
Hence, we have $\mod(\dHdeltamu{\Sigma}(E))=\mod(\dHdeltamu{\GMax}(E))=\{\{a\},\{b\}\}$. Thus, for instance, we can give $\phi = ( a \vee b) \wedge (\neg a \vee \neg b)$ as a result of the merging for both operators.
However, there is no $\psi\in\Lhorn$ with $\mod(\psi)=\{\{a\},\{b\}\}$ (each  $\psi\in\Lhorn$  satisfies the following closure property in terms of its set of models: for every $I,J\in \mod(\psi)$, also $I\cap J\in \mod(\psi)$)). 
Thus, the result of the operator has to be ``refined'', such that it fits
into the Horn fragment.
On the other hand, it holds that 
$\mu \in \Lkrom$, $E \in \calE_{\Lkrom}$ and also the result $\phi$ is in Krom. This shows that different fragments behave differently on certain instances.
Nonetheless, we  aim for a uniform approach for refining merging operators.
\end{example}

We are interested in the following:  Given a known merging operator
$\Delta$ and a fragment $\L'$ of 
propositional logic, 
how can we adapt $\Delta$ 
to a new merging operator $\revdelta$ such that, for each
$E\in\calE_{\L'}$ and $\mu\in\L'$,
 $\revdelta_{\mu}(E)\in\calK_{\L'}$?
Let us define a few natural desiderata for $\revdelta$ 
inspired by the work on belief revision. See \cite{CPPW12} for a discussion.

\begin{definition}\label{def:ref}
Let $\L'$ be a fragment of classical logic and 
$\Delta$ a merging operator.
We call an operator $\revdelta\colon \calE_{\L'}\times\L'\rightarrow
\calK_{\L'}$
a \emph{$\Delta$-refinement for $\L'$} if it satisfies the following
properties, 
for each $E, E_1, E_2\in\calE_{\L'}$ and
$\mu,\mu_1, \mu_2\in\L'$.
\begin{enumerate}
\item consistency: $\Delta_\mu(E)$ is consistent  if and only if
$\revdelta_\mu(E)$ is consistent
\item equivalence:
if $E_1\equiv E_2$ and $\Delta_{\mu_1}(E_1) \equiv \Delta_{\mu_2}(E_2)$ then $\revdelta_{\mu_1}(E_1) \equiv
\revdelta_{\mu_2}(E_2)$
\item
containment: 
$T_{\L'}(\Delta_\mu(E))\subseteq T_{\L'}(\revdelta_\mu(E))$
\item invariance: If $\Delta_\mu(E)\in\calK_{\langle\L'\rangle}$, then
$T_{\L'}(\revdelta_\mu(E)) \subseteq 
T_{\L'}(\Delta_\mu(E) )$, where $\langle\L'\rangle$ denotes the set of formul\ae\  in $\L$ for which there exists an equivalent formula in $\L'$.
\end{enumerate}
\end{definition}

Next we introduce examples of refinements that fit 
Definition \ref{def:ref}.

\begin{definition}
\label{def:first_refinedop}
Let $\Delta$ be a merging operator 
and
$\beta\in\B$.
We define the \emph{$\clop$-based}
refined operator $\Delta^{\clop}$ as:
\begin{align*}
  \mod(\Delta_\mu^{\clop}(E)) = 
\clop(\M).
\end{align*}
where $\M=\mod(\Delta_\mu(E))$.

We define the \emph{$\lex$-based}
refined operator $\Delta^{\lex}$ as:
\begin{align*}
  \mod(\Delta_\mu^{\lex}(E)) =
    \begin{cases}
      \M & \text{if } \clop(\M)=\M,\\
      \{\lex(\M)\} & \text{otherwise},
    \end{cases}
\end{align*}
where $\lex$ is a function
that selects the minimum from a set of interpretations
with respect to a given and fixed order.

We define the \emph{$\lex/\clop$-based}
refined
operator $\Delta^{\lex/\clop}$ as:
\begin{align*}
  \Delta_\mu^{\lex/\clop}(E) =
    \begin{cases}
\Delta_\mu^{\lex}(E) & \text{if } \cardintersection(\M, E)=0 \\                      
      \Delta_\mu^{\clop}(E) & \text{otherwise}.
    \end{cases}
\end{align*}
\end{definition}

The intuition behind the last refinement is to ensure a certain
form of fairness, i.e.\ if no model is selected from the profile, 
this carries over to the refinement.

\begin{proposition}\label{prop:ref}
For any
merging operator
$\Delta:\calE_\L\times\L\rightarrow\calK_\L$,
$\op\in\B$ and
$\L'\subseteq\L$ a $\op$-\fragment,
the operators
$\Delta^{\clop}$,
$\Delta^{\lex}$
 and $\Delta_\mu^{\lex/\clop}$ are
$\Delta$-refinements for $\L'$.
\end{proposition}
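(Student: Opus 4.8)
The plan is to verify, for each of the three refined operators and each of the four defining properties of a $\Delta$-refinement (consistency, equivalence, containment, invariance), that the property holds. Before diving into cases, I would isolate two facts that will be used repeatedly. First, a \emph{semantic reformulation of containment and invariance}: since $\L'$ is a $\op$-fragment, for a base $K$ we have $T_{\L'}(K)\subseteq T_{\L'}(K')$ iff $\mod(K)\subseteq\cl{\op}(\mod(K'))$; moreover if $\mod(K')$ is already closed under $\op$ then $T_{\L'}(K')=\{\phi\in\L'\mid \mod(K')\subseteq\mod(\phi)\}$ and equality $\mod(K')=\cl{\op}(\mod(K'))$ makes $T_{\L'}(K)\subseteq T_{\L'}(K')$ equivalent to $\mod(K)\subseteq\mod(K')$ when \emph{both} are closed. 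So containment becomes ``$\M\subseteq\cl{\op}$ of the refined model set'' and invariance (when $\Delta_\mu(E)\in\calK_{\langle\L'\rangle}$, i.e.\ $\M$ is already closed under $\op$) becomes ``the refined model set is contained in $\cl{\op}(\M)=\M$''. Second, the \emph{elementary closure properties} already listed after Definition of $\cl{\op}$: monotonicity, $\cl{\op}(\M)=\M$ when $|\M|=1$, and $\cl{\op}(\emptyset)=\emptyset$; together with the nonemptiness of $\cl{\op}(\M)$ whenever $\M\neq\emptyset$.

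Next I would run the four checks for $\Delta^{\clop}$, which is the easy case. Consistency: $\mod(\Delta_\mu(E))=\M$, and $\cl{\op}(\M)=\emptyset$ iff $\M=\emptyset$ by the properties above, so $\Delta^{\clop}_\mu(E)$ is consistent iff $\Delta_\mu(E)$ is. Containment: $\M\subseteq\cl{\op}(\M)=\mod(\Delta^{\clop}_\mu(E))$ by monotonicity (or rather the defining inclusion $\M\subseteq\cl{\op}(\M)$), and both sides being closed (the right side by construction, the left side a subset whose $T_{\L'}$ we compare) this gives $T_{\L'}(\Delta_\mu(E))\subseteq T_{\L'}(\Delta^{\clop}_\mu(E))$ via the reformulation. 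Invariance: if $\M=\cl{\op}(\M)$ then $\mod(\Delta^{\clop}_\mu(E))=\M=\mod(\Delta_\mu(E))$ up to the closure, so the two theories coincide. Equivalence: if $E_1\equiv E_2$ and $\Delta_{\mu_1}(E_1)\equiv\Delta_{\mu_2}(E_2)$, then $\M_1=\M_2$, hence $\cl{\op}(\M_1)=\cl{\op}(\M_2)$, hence $\Delta^{\clop}_{\mu_1}(E_1)\equiv\Delta^{\clop}_{\mu_2}(E_2)$. I also need to confirm $\Delta^{\clop}_\mu(E)\in\calK_{\L'}$: its model set is closed under $\op$, so by clause 2 of Definition~\ref{def:fragment} there is a formula in $\L'$ with that model set, and $\calK_{\L'}$ allows it; likewise for the other two operators, which is why the $\lex$-branch outputs a singleton (closed by property (ii)) and the $\clop$-branch is already handled.

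For $\Delta^{\lex}$ the checks are similar but split on whether $\clop(\M)=\M$: if it is, $\Delta^{\lex}$ agrees with $\Delta$ and all four properties are immediate; if not, the output is $\{\lex(\M)\}$, a nonempty singleton (so consistency holds since in this branch $\M\neq\emptyset$, as $\cl{\op}(\emptyset)=\emptyset$ would be closed), it is closed under $\op$ so lies in $\calK_{\L'}$, containment holds because $\lex(\M)\in\M$ so $\mod(\Delta^{\lex}_\mu(E))=\{\lex(\M)\}\subseteq\M\subseteq\cl{\op}(\{\lex(\M)\})$... wait, that last inclusion is the wrong direction, so here I must instead argue containment at the level of theories: $T_{\L'}(\Delta_\mu(E))=\{\phi\in\L'\mid\M\subseteq\mod(\phi)\}$ and since $\lex(\M)\in\M$, every such $\phi$ also satisfies $\lex(\M)\in\mod(\phi)$, giving $T_{\L'}(\Delta_\mu(E))\subseteq T_{\L'}(\Delta^{\lex}_\mu(E))$ directly. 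Invariance is vacuous in this branch because $\Delta_\mu(E)\in\calK_{\langle\L'\rangle}$ forces $\M$ closed, i.e.\ the first branch. Equivalence holds because equivalent inputs produce the same $\M$ and $\lex$ is a fixed deterministic function.

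Finally $\Delta^{\lex/\clop}$ dispatches to $\Delta^{\lex}$ or $\Delta^{\clop}$ according to whether $\cardintersection(\M,E)=0$; since this dispatch depends only on $\M$ and $\mmod(E)$, and both branch operators have already been shown to be $\Delta$-refinements, each property transfers: consistency and containment hold branchwise; equivalence holds because $E_1\equiv E_2$ and $\M_1=\M_2$ make the dispatch condition agree, so we land in the same branch and invoke that branch's equivalence property; invariance again only needs checking when $\M$ is closed, in which case both $\Delta^{\lex}$ and $\Delta^{\clop}$ return (a formula equivalent to) $\Delta_\mu(E)$, so whichever branch is taken the theory is unchanged. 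The main obstacle I anticipate is bookkeeping around the \emph{theory-level} formulation of containment and invariance versus the naive model-set inclusion --- in particular getting the direction of inclusions right in the $\lex$-branch, where the refined model set is \emph{smaller} than $\M$ yet containment of theories still goes the required way precisely because $\lex(\M)\in\M$; once the reformulation lemma relating $T_{\L'}$-inclusion to $\cl{\op}$-inclusion is stated cleanly, everything else is routine case analysis.
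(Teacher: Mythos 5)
Your proposal follows essentially the same route as the paper's proof: a case analysis over the three operators and the four properties of Definition~\ref{def:ref}, resting on the same key facts --- $\cl{\op}(\emptyset)=\emptyset$, singletons are closed under $\op$ (so the $\lex$-output is in $\calK_{\L'}$), monotonicity of the closure, $\lex(\M)\in\M$, vacuity of invariance in the non-closed $\lex$-branch, and inheritance of all properties by $\Delta^{\lex/\clop}$ from its two branches. Your explicit remark that the dispatch condition $\cardintersection(\M,E)=0$ depends only on $\M$ and $\mmod(E)$, so equivalent inputs land in the same branch, is in fact slightly more careful than the paper, which simply asserts that $\Delta^{\lex/\clop}$ inherits the properties from $\Delta^{\clop}$ and $\Delta^{\lex}$.

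One correction is needed: your ``semantic reformulation'' is stated backwards. The correct equivalence is $T_{\L'}(K)\subseteq T_{\L'}(K')$ iff $\mod(K')\subseteq\cl{\op}(\mod(K))$ --- the base with the larger $\L'$-theory is the one with the smaller model set --- not $\mod(K)\subseteq\cl{\op}(\mod(K'))$. Under the corrected lemma, containment reads ``$\mod(\revdelta_\mu(E))\subseteq\cl{\op}(\M)$'' and invariance (for closed $\M$) reads ``$\M\subseteq\cl{\op}(\mod(\revdelta_\mu(E)))$''; your version of invariance, ``refined set $\subseteq\M$'', is genuinely insufficient (a proper closed subset of a closed $\M$ would satisfy it while violating $T_{\L'}(\revdelta_\mu(E))\subseteq T_{\L'}(\Delta_\mu(E))$). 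Fortunately none of your case-by-case verifications actually lean on the false direction: for $\Delta^{\clop}$ the direct monotonicity argument ($\M\subseteq\mod(\phi)$ with $\phi\in\L'$ gives $\cl{\op}(\M)\subseteq\cl{\op}(\mod(\phi))=\mod(\phi)$) is exactly what the paper uses and what you gesture at; in the $\lex$-branch you notice the direction problem yourself and argue containment directly at the theory level via $\lex(\M)\in\M$; and in every invariance case the refined model set equals $\M$ outright, so the two theories coincide. With the auxiliary lemma corrected, the argument is sound and matches the paper's.
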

\begin{proof}
Let $\mu \in\L'$, $E\in\calE_{\L'}$ and $\op\in\B$.
We show that each operator yields
a base from $\calK_{\L'}$
and moreover satisfies
consistency, equivalence, containment and
invariance, cf.\ Definition \ref{def:ref}.

$\Delta^{\clop}$:
$\Delta^{\clop}_\mu(E)\in\L'$ since
by assumption $\L'$ is a
$\op$-\fragment\ and thus closed under $\op$.
Consistency holds since %
$\mod(\Delta^{\clop}_\mu(E)) = \clop(\mod(\Delta_\mu(E)))$
and $\clop(\M)=\emptyset$ iff $\M=\emptyset$.
Equivalence holds since $\mod(\Delta_{\mu_1}(E_1)) = \mod(\Delta_{\mu_2}(E_2))$ implies
$\clop(\mod(\Delta_{\mu_1}(E_1))) = \clop(\mod(\Delta_{\mu_2}(E_2)))$. Containment: let $\phi \in T_{\L'}(\Delta_\mu(E))$, i.e.\ $\phi \in \L'$ and $\mod(\Delta_\mu(E)) \subseteq \mod(\phi)$.
By monotonicity of $\clop$,
then $\clop(\mod(\Delta_\mu(E)))\subseteq \clop(\mod(\phi))$.
Since $\phi  \in\L'$ then $\clop(\mod(\Delta_\mu(E)))\subseteq \mod(\phi)$ therefore $\phi \in T_{\L'}(\Delta^{\clop}_\mu(E))$.
Invariance: let $\phi \in T_{\L'}(\Delta^{\clop}_\mu(E))$,
i.e.\ $\phi  \in\L'$ and $\clop(\mod(\Delta_\mu(E)))\subseteq \mod(\phi)$.
By hypothesis $\clop(\mod(\Delta_\mu(E))) \supseteq \mod(\Delta_\mu(E))$, therefore $\phi \in T_{\L'}(\Delta_\mu(E))$.

$\Delta^{\lex}$: if $\mod(\Delta^{\lex}_\mu(E))) = \clop(\mod(\Delta_\mu(E)))$
(i.e.\ $\Delta_\mu(E)\in\calK_{\langle \L'\rangle}$)
then $\Delta^{\lex}$ satisfies all the required properties as shown above; otherwise consistency, equivalence and containment hold since $\mod(\Delta^{\lex}_\mu(E))) = \{\lex (\mod(\Delta_\mu(E)))\}$. Moreover, by definition each fragment contains a formula $\phi$ with $\mod(\phi)=\{\omega\}$ where $\omega$ is an arbitrary interpretation. $\Delta_\mu(E)\in\L'$ thus also holds in this case.

$\Delta^{\lex/\clop}$: satisfies the required properties since $\Delta^{\clop}$ and $\Delta^{\lex}$ satisfy them.
\end{proof}

\begin{example}
\label{ex:3}
Consider the profile $E$, the integrity constraint $\mu$  given in Example \ref{ex:1}, the distance-based merging operator $\dHdelta{\Sigma}$, and let $\op$ be the binary AND function. Let us have the following order over the set of interpretations on $\{a, b \}$: $\emptyset < \{ a\} < \{ b \}  < \{ a, b \}$. The result of merging is $\mod(\dHdeltamu{\Sigma}(E))=\{\{a\},\{b\}\}$. The $\lex$-based $\dHdelta{\Sigma}$-refined operator, denoted by $\Delta^{\lex}$, is such that
$\mod(\Delta_\mu^{\lex}(E))=\{\{a\}\}$. The $\clop$-based $\dHdelta{\Sigma}$-refined operator, denoted by $\Delta^{\clop}$, is such that
$\mod(\Delta_\mu^{\clop}(E))= \{\{a\},\{b\}, \emptyset\}$. 
The same result is achieved by the 
 $\lex/\clop$-based $\dHdelta{\Sigma}$-refined operator since $\cardintersection(\mod(\dHdeltamu{\Sigma}(E)), E)=2$. 
\end{example}

In what follows we show how to 
capture not only a particular refined operator 
but characterize the class of \emph{all} refined operators.

\begin{definition}\label{def:op-mapping}
 Given  $\op\in\B$, we define a \emph{$\op$-\mapping}, $\fop$, as an
application which to every  set of models $\calM$ and every multi-set of sets of models $\calX$ associates 
a set of models  
$\fop(\calM, \calX)$ 
such that:

\begin{enumerate}
 \item $\clop(\fop(\calM,\calX))=\fop(\calM,\calX)$
($\fop(\calM,\calX)$ is closed under
$\op$) \label{property:mapclosure}
\item  $\fop(\calM,\calX)\subseteq 
\clop(\calM)$ \label{property:mapinclusion}
\item if $\calM=\clop(\calM)$, then $\fop(\calM,\calX)=\calM$
\label{property:mapinvariance}
\item If $\calM\ne\emptyset$, then $\fop(\calM,\calX)\ne\emptyset$.
\label{property:mapemptyset}
\end{enumerate}
\end{definition}

The concept of mappings allows us to define
a family of refined operators for fragments of classical logic that captures the examples given before.

\begin{definition}\label{def:opclop}
Let $\Delta:\calE_\L\times\L\rightarrow\calK_\L$ be a merging operator and
$\L'\subseteq \L$ be a $\op$-fragment of classical logic
with $\op\in \B$. 
For a $\op$-mapping $\fop$ we denote with
$\Delta^{\fop}:\calE_{\L'}\times\L'\rightarrow\calK_{\L'}$ the operator for $\L'$ defined
as $\mod(\Delta^{\fop}_\mu(E))=\fop(\mod(\Delta_\mu(E)),\mmod(E))$.
The class $[\Delta,\L']$ contains all 
operators $\Delta^{\fop}$ where $\fop$ is a $\op$-mapping  
and $\op\in\B$ such that 
$\L'$ is a $\op$-fragment.
\end{definition}

The next proposition is central in reflecting that the 
above class captures all refined operators we had in mind, cf.\
Definition~\ref{def:ref}.

\begin{proposition}\label{prop:characterization}
Let $\Delta:\calE_\L\times\L\rightarrow\calK_\L$ be a basic merging operator and
$\L'\subseteq \L$ a characterizable fragment of classical logic. 
Then, 
$[\Delta,\L']$
is the set of all 
$\Delta$-refinements for $\L'$.
\end{proposition}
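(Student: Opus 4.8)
The plan is to prove the two inclusions separately. First I would show that every operator $\Delta^{\fop}\in[\Delta,\L']$ is a $\Delta$-refinement for $\L'$; this amounts to checking the four properties of Definition~\ref{def:ref} directly from the four properties of a $\op$-mapping in Definition~\ref{def:op-mapping}. Consistency follows from property~\ref{property:mapemptyset} of the mapping (together with the fact that $\clop(\M)=\emptyset$ iff $\M=\emptyset$ and property~\ref{property:mapinclusion}). Equivalence is immediate: if $E_1\equiv E_2$ then $\mmod(E_1)=\mmod(E_2)$, and if additionally $\Delta_{\mu_1}(E_1)\equiv\Delta_{\mu_2}(E_2)$ then the two arguments fed to $\fop$ coincide, so the outputs coincide. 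For containment, take $\phi\in T_{\L'}(\Delta_\mu(E))$: then $\phi\in\L'$ and $\mod(\Delta_\mu(E))\subseteq\mod(\phi)$; since $\mod(\phi)$ is closed under $\op$, monotonicity of $\clop$ gives $\clop(\mod(\Delta_\mu(E)))\subseteq\mod(\phi)$, and property~\ref{property:mapinclusion} of $\fop$ then yields $\mod(\Delta^{\fop}_\mu(E))\subseteq\mod(\phi)$. For invariance, if $\Delta_\mu(E)\in\calK_{\langle\L'\rangle}$ then $\M:=\mod(\Delta_\mu(E))$ satisfies $\clop(\M)=\M$, so property~\ref{property:mapinvariance} forces $\fop(\M,\mmod(E))=\M$, i.e.\ $\Delta^{\fop}_\mu(E)\equiv\Delta_\mu(E)$, which gives the desired inclusion of theories. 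I would also note that $\Delta^{\fop}_\mu(E)\in\calK_{\L'}$ by property~\ref{property:mapclosure} together with clause~2 of Definition~\ref{def:fragment}.

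The harder direction is to show that \emph{every} $\Delta$-refinement $\revdelta$ for $\L'$ arises as some $\Delta^{\fop}$. Here I would fix $\op\in\B$ with $\L'$ a $\op$-fragment (which exists since $\L'$ is characterizable), and try to \emph{read off} a $\op$-mapping from $\revdelta$. The naive attempt $\fop(\M,\calX):=\mod(\revdelta_\mu(E))$ for some $E,\mu$ with $\mod(\Delta_\mu(E))=\M$ and $\mmod(E)=\calX$ is not obviously well-defined, because there could be several such pairs $(E,\mu)$, or none at all. The equivalence property of $\revdelta$ is exactly what handles the ``several'' case: if $(E_1,\mu_1)$ and $(E_2,\mu_2)$ both realize $(\M,\calX)$, then $E_1\equiv E_2$ and $\Delta_{\mu_1}(E_1)\equiv\Delta_{\mu_2}(E_2)$, so $\revdelta_{\mu_1}(E_1)\equiv\revdelta_{\mu_2}(E_2)$ and the definition is unambiguous. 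The ``none'' case is where being a \emph{basic} merging operator is used: I would argue that for a basic $\Delta$, every pair $(\M,\calX)$ that can possibly be relevant is in fact realized — in particular one can always take a profile $E$ with $\mmod(E)=\calX$ and choose $\mu\in\L'$ appropriately, using that $\L'$ contains formulae for all singleton model sets and is closed under conjunction, and that $\mod(\Delta_\mu(\{\{\top\}\}))=\mod(\mu)$. For pairs never arising, set $\fop(\M,\calX):=\clop(\M)$ as a harmless default; this still satisfies the mapping axioms. I should be slightly careful about whether $\mod(\Delta_\mu(E))$ can be an arbitrary subset of $\mod(\mu)$, but for the purpose of defining $\fop$ only the values actually taken matter.

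Once $\fop$ is defined this way, I would verify it is a genuine $\op$-mapping: property~\ref{property:mapclosure} (closure under $\op$) holds because $\revdelta_\mu(E)\in\calK_{\L'}$ and $\L'$ is a $\op$-fragment, so its models are $\op$-closed; property~\ref{property:mapemptyset} (non-emptiness) holds because, by consistency of $\revdelta$, $\mod(\revdelta_\mu(E))\ne\emptyset$ whenever $\M=\mod(\Delta_\mu(E))\ne\emptyset$; property~\ref{property:mapinclusion} ($\fop(\M,\calX)\subseteq\clop(\M)$) follows from the containment property of $\revdelta$ applied to any $\phi\in\L'$ with $\mod(\phi)=\clop(\M)$ — such a $\phi$ exists by Definition~\ref{def:fragment}, and $\Delta_\mu(E)\models\phi$, hence $\revdelta_\mu(E)\models\phi$; and property~\ref{property:mapinvariance} (invariance on $\op$-closed inputs) follows from the invariance property of $\revdelta$, since $\clop(\M)=\M$ means $\Delta_\mu(E)\in\calK_{\langle\L'\rangle}$, forcing $T_{\L'}(\revdelta_\mu(E))=T_{\L'}(\Delta_\mu(E))$ and therefore $\mod(\revdelta_\mu(E))=\clop(\mod(\Delta_\mu(E)))=\M$ (both sides being $\op$-closed model sets of $\L'$-bases with the same $\L'$-theory). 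Finally $\Delta^{\fop}=\revdelta$ by construction. I expect the main obstacle to be the bookkeeping around well-definedness of $\fop$ and, relatedly, pinning down exactly which pairs $(\M,\calX)$ need to be in the domain and why the ``basic operator'' hypothesis guarantees the realizability needed — this is the only place the extra hypothesis (basic, versus arbitrary merging operator) is used, so the argument must make that dependence explicit.
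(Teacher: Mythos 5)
Your proposal is correct and follows essentially the same route as the paper's proof: one inclusion by checking the four refinement properties of Definition~\ref{def:ref} directly against the four axioms of Definition~\ref{def:op-mapping}, and the converse by reading a $\op$-mapping off $\revdelta$, using the equivalence property for well-definedness on realized pairs and an admissible default on unrealized ones (the paper takes a lexicographically minimal singleton where you take $\clop(\M)$; both satisfy the axioms). The only quibble is your attribution of the ``basic'' hypothesis to realizability of pairs $(\M,\calX)$ --- the paper's proof, like yours, simply falls back on the arbitrary default for unrealized pairs and never needs that realizability claim.
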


\begin{proof}
Let $\L'$ be a $\op$-fragment for
some $\op\in\B$.
Let $\Delta^\star\in[\Delta,\L']$.
 We show that $\Delta^\star$ is 
a $\Delta$-refinement for $\L'$. 
Let $\mu\in\L'$ and $E\in\calE_{\L'}$.
Since 
$\Delta^\star\in[\Delta,\L']$ there exists a $\op$-mapping $\fop$, such that 
$\mod(\Delta^\star_\mu(E))=\fop(\mod(\Delta_\mu(E)), \mmod(E))$.
By Property \ref{property:mapclosure}
in Definition~\ref{def:op-mapping} 
$\Delta^\star_\mu(E)$ is indeed
in $\calK_{\L'}$.
Consistency: 
If 
$\mod(\Delta_\mu(E))\neq\emptyset$ then 
$\mod(\Delta^\star_\mu(E))
\neq\emptyset$
by Property \ref{property:mapemptyset} in Definition~\ref{def:op-mapping}.
Otherwise,
by Property \ref{property:mapinclusion} in 
Definition~\ref{def:op-mapping}, we get
$\mod(\Delta^\star_\mu(E))
\subseteq
\clop(\mod(\Delta_\mu(E)))=\clop(\emptyset)=\emptyset$.
Equivalence for $\Delta^\star$ is clear by definition and since $\fop$ is defined on 
sets of models.
Containment:
let $\phi\in T_{\L'}(\Delta_\mu(E))$, i.e., $\phi\in\L'$ and 
$\mod(\Delta_\mu(E))\subseteq \mod(\phi)$. We have
$\clop(\mod(\Delta_\mu(E)))\subseteq \clop(\mod(\phi))$ 
by monotonicity  of $\clop$.
By Property \ref{property:mapinclusion} of 
Definition~\ref{def:op-mapping},
$\mod(\Delta^\star_\mu(E)) \subseteq \clop(\mod(\Delta_\mu(E)))$.
Since   $\phi\in\L'$ we have
$\clop(\mod(\phi))=\mod(\phi)$.
Thus, $\mod(\Delta^\star_\mu(E))\subseteq\mod(\phi)$, i.e., $\phi\in T_{\L'}(\Delta^\star_\mu(E))$.
Invariance:
In case
$\Delta_\mu(E)\in\calK_{\langle\L'\rangle}$, we
have
$\clop(\mod(\Delta_\mu(E)))=
 \mod(\Delta_\mu(E))$
since
$\L'$ is a $\op$-fragment.
By 
Property \ref{property:mapinvariance}
in Definition~\ref{def:op-mapping},
we have $\mod(\Delta^\star_\mu(E)) =
\fop(\mod(\Delta_\mu(E)), \mmod(E))=
\mod(\Delta_\mu(E))$.
Thus  $T_{\L'}(\Delta^\star_\mu(E)) \subseteq  T_{\L'}(\Delta_\mu(E))$ as required.

Let $\Delta^\star$ be
a $\Delta$-refinement for $\L'$.  
We show  that
$\Delta^\star\in[\Delta,\L']$.
Let $f$ be defined as follows   for any  set $\M$ of interpretations and $\calX$ a multi-set of sets of interpretations: 
$f(\emptyset, \calX)=\emptyset$. For $\M\ne\emptyset$, if $\clop(\M)=\M$ then $f(\M,\calX)=\calM$, otherwise if there exists
a pair $(E,\mu)\in(\calE_{\L'}, \L')$ such that $\mmod(E)=\calX$ and $\mod(\Delta_\mu(E))=\calM$, then
 we define
$f(\M,\calX)=\mod(\Delta^\star_\mu(E))$.  If there is no such $(E,\mu)$ then we
arbitrarily define $f(\M,\calX)$ as the set consisting of a single model, say the minimal model of $\M$ in the  lexicographic
order. Note that since $\Delta^\star$ is a $\Delta$-refinement for $\L'$, it satisfies
the property of equivalence,
thus the actual choice of the pair $(E,\mu)$ is not 
relevant, and hence $f$ is well-defined.
Thus the   refined operator $\Delta^\star$ behaves like the operator $\Delta^f$.

We show that such a mapping $f$
is a $\op$-mapping.
We  show that the four properties in Definition~\ref{def:op-mapping}
hold for $f$. Property \ref{property:mapclosure} is ensured since for every pair $(\M,\calX)$, $f(\M,\calX)$ is closed under $\op$. Indeed, either $f(\M,\calX)=\M$ if $\M$ is closed under $\op$, or $f(\M,\calX)=\mod(\Delta^\star_\mu(E))$ and since $\Delta^\star_\mu(E)\in\calK_{\L'}$ its set of models is closed under $\op$, or $f(\M,\calX)$ consists of a single interpretation, and thus is also closed under $\op$.
Let us show Property \ref{property:mapinclusion}, i.e.,  $f(\M,\calX)
\subseteq \clop(\M)$ for any pair $(\M,\calX)$.
It is obvious when   $\M=\emptyset$ (then
$f(\M,\calX)=\emptyset$), as well as when $f(\M,\calX)$ is a singleton and when $\M$ is closed and thus $f(\M,\calX)=\M$. Otherwise
$f(\M,\calX)=\mod(\Delta^\star_\mu(E))$ and since $\Delta^\star$ satisfies containment
$\mod(\Delta^\star_\mu(E))\subseteq\clop(\mod(\Delta_\mu(E))$. Therefore  in
any case we have 
$f(\M,\calX)\subseteq \clop(\M)$.  Property \ref{property:mapinvariance}
 follows trivially from the definition of $f(\M,\calX)$ when $\M$ is closed under $\op$.
Property~\ref{property:mapemptyset} is ensured by consistency of $\Delta^\star$.
\end{proof}

Note that the $\op$-mapping which is used in the characterization of refined merging operators differs from the one used in the context of revision (see \cite{CPPW12}).
Indeed, our mapping has  two arguments (and not only one as in the case of revision). %
The additional multi-set of sets of models representing the profile  is required to capture approaches like
the $\lex/\clop$-based refined operator, which are profile dependent. 

\section{IC Postulates}

\begin{table*}
\centering
\begin{tabular}{c|c|c|c|c|c}
& $(\Delta^{d_H,\Sigma})^{\clop}$ & $(\Delta^{d_H,\GMax})^{\clop}$ & $(\Delta^{d_D,x})^{\clop}$ & $(\Delta^{d,x})^\lex$  & $(\Delta^{d,x})^{\lex/\clop}$ \\ 
\hline 
$\ic 4$ & + & - & + & - & + \\ 
\hline 
$\ic 5$, $\ic 7$ & - & - & - & + & - \\ 
\hline 
$\ic 6$, $\ic 8$ & - & - & - & - & -
\end{tabular}
\caption{Overview of results for 
$(\ic 4)$--$(\ic 8)$ for refinements in 
the Horn and Krom fragment ($x\in\{\Sigma,\GMax\}$, $d\in\{d_H,d_D\}$). 
}\label{tab:overview}
\end{table*}

The aim of this section is to study whether refinements of merging operators preserve the IC postulates.
We first show that in case the initial operator satisfies the most basic postulates ($(\ic 0)$--$(\ic 3)$), then so does any of its refinements.
It turns out that this result can not be extended to the remaining postulates. 
For $(\ic 4)$ we characterize a subclass of refinements for which this postulate is preserved.
For the four remaining postulates we  study two representative kinds of distance-based merging operators. We show that
postulates $(\ic 5)$ and $(\ic 7)$ are violated for all of our proposed examples of refined operators with the exception of the $\lex$-based refinement.
For $(\ic 6)$ and $(\ic 8)$ the situation is even worse in the sense that no refinement of our proposed examples of merging operators can satisfy them neither for $\Lhorn$ nor for $\Lkrom$.
Table~\ref{tab:overview} gives
an overview of the results of this section.
However, note that some of the forthcoming results are more general
and hold for arbitrary fragments and/or operators.

\begin{proposition}\label{prop:preserving_postulates}
Let $\Delta$ be a merging operator satisfying postulates $(\ic 0)$--$(\ic 3)$,
and $\L'\subseteq \L$ a characterizable fragment.
Then each $\Delta$-refinement for $\L'$ satisfies $(\ic 0)$--$(\ic 3)$ 
in $\L'$ as well.
\end{proposition}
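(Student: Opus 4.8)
The plan is to verify each of the four basic postulates $(\ic 0)$–$(\ic 3)$ directly, working from the representation of a refinement $\revdelta$ through Proposition~\ref{prop:characterization}: since $\L'$ is characterizable, it is a $\op$-fragment for some $\op\in\B$, and $\revdelta=\Delta^{\fop}$ for a $\op$-mapping $\fop$, so $\mod(\revdelta_\mu(E))=\fop(\mod(\Delta_\mu(E)),\mmod(E))$. The governing intuition is that each basic postulate either constrains consistency (handled by Properties~\ref{property:mapinclusion} and~\ref{property:mapemptyset} of the mapping, equivalently the consistency condition in Definition~\ref{def:ref}), or asserts membership of models in a target set that is itself closed under $\op$ (handled by Property~\ref{property:mapinclusion}, i.e.\ containment), or is an exactness condition under a consistency hypothesis (handled by Property~\ref{property:mapinvariance}, i.e.\ invariance). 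I would phrase the argument using the refinement properties of Definition~\ref{def:ref} rather than the mapping, since Proposition~\ref{prop:characterization} tells us these are equivalent and the properties are more transparent.

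First, for $(\ic 0)$: from $\Delta$ satisfying $(\ic 0)$ we have $\mod(\Delta_\mu(E))\subseteq\mod(\mu)$, so $\mu\in T_{\L'}(\Delta_\mu(E))$ (as $\mu\in\L'$); by containment, $\mu\in T_{\L'}(\revdelta_\mu(E))$, i.e.\ $\revdelta_\mu(E)\models\mu$. For $(\ic 1)$: if $\mu$ is consistent then $\Delta_\mu(E)$ is consistent by $(\ic 1)$ for $\Delta$, and then $\revdelta_\mu(E)$ is consistent by the consistency property of the refinement. For $(\ic 3)$: suppose $E_1\equiv E_2$ and $\mu_1\equiv\mu_2$; then $\Delta_{\mu_1}(E_1)\equiv\Delta_{\mu_2}(E_2)$ by $(\ic 3)$ for $\Delta$, and the equivalence property of the refinement gives $\revdelta_{\mu_1}(E_1)\equiv\revdelta_{\mu_2}(E_2)$.

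The main obstacle is $(\ic 2)$, which is an equality $\Delta_\mu(E)=\bigwedge E\wedge\mu$ under the hypothesis that $\bigwedge E$ is consistent with $\mu$; here I must both ensure the result lands exactly on $\mod(\bigwedge E\wedge\mu)$ and invoke invariance, which requires knowing $\Delta_\mu(E)\in\calK_{\langle\L'\rangle}$. The key point is that $\mod(\bigwedge E\wedge\mu)=\bigcap_i\mod(K_i)\cap\mod(\mu)$ is an intersection of sets each of which is closed under $\op$ (each $K_i$ and $\mu$ lie in $\L'$, a $\op$-fragment, so their model sets are $\op$-closed), and $\op$-closed sets are closed under intersection — hence $\mod(\bigwedge E\wedge\mu)$ is $\op$-closed, so $\bigwedge E\wedge\mu\in\calK_{\langle\L'\rangle}$ (indeed in $\calK_{\L'}$, using condition~3 of Definition~\ref{def:fragment}). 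By $(\ic 2)$ for $\Delta$, $\mod(\Delta_\mu(E))=\mod(\bigwedge E\wedge\mu)$, so $\Delta_\mu(E)\in\calK_{\langle\L'\rangle}$ and invariance applies, giving $T_{\L'}(\revdelta_\mu(E))\subseteq T_{\L'}(\Delta_\mu(E))$; combined with containment this yields $T_{\L'}(\revdelta_\mu(E))=T_{\L'}(\Delta_\mu(E))$, and since both bases are in $\calK_{\L'}$ and $\L'$ separates models (each singleton model set is definable in $\L'$), equal $\L'$-theories force $\mod(\revdelta_\mu(E))=\mod(\Delta_\mu(E))=\mod(\bigwedge E\wedge\mu)$, i.e.\ $(\ic 2)$ holds in $\L'$. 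I would state the $\op$-closedness-under-intersection fact as a small explicit observation, as it is the crux and is used again later in the paper.

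One subtlety worth a sentence: in $(\ic 2)$ the equality $\Delta_\mu(E)=\bigwedge E\wedge\mu$ is between bases, which throughout the paper is read up to logical equivalence of the associated conjunctions (the preliminaries identify $K$ with $\bigwedge K$); so establishing the model-set equality suffices, and I would note that $\L'$ being closed under $\wedge$ (Definition~\ref{def:fragment}, item~3) guarantees $\bigwedge E\wedge\mu$ is itself a base in $\calK_{\L'}$, so the postulate is even meaningful within the fragment.
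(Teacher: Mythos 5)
Your proof is correct and follows essentially the same route as the paper's: you push each postulate through the corresponding refinement property (containment for $(\ic 0)$, consistency for $(\ic 1)$, invariance plus the $\op$-closedness of $\mod(\bigwedge E\wedge\mu)$ for $(\ic 2)$, equivalence for $(\ic 3)$), which is exactly the paper's argument via the $\op$-mapping properties, merely restated at the level of Definition~\ref{def:ref}. The one imprecision is the final step of $(\ic 2)$: to pass from $T_{\L'}(\revdelta_\mu(E))=T_{\L'}(\Delta_\mu(E))$ to equality of model sets, the relevant fact is not that singleton model sets are definable in $\L'$ (definability of singletons does not by itself let $T_{\L'}$ separate two $\op$-closed model sets), but that each of the two bases is equivalent to an $\L'$-formula lying in its own $\L'$-theory --- $\bigwedge E\wedge\mu$ for one and $\bigwedge\revdelta_\mu(E)$ for the other, both in $\L'$ by item~3 of Definition~\ref{def:fragment} --- which immediately yields both inclusions of model sets.
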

\begin{proof}
Since $\L'$ is characterizable there exists a $\beta\in\B$, such 
that $\L'$ is a $\op$-fragment. Let $\Delta^\star$ be a $\Delta$-refinement for $\L'$. According to Proposition 
\ref{prop:characterization} we can assume that  $\Delta^\star\in [\Delta,\L']$ is an
operator of form $\Delta^{\fop}$ where $\fop$ is a suitable $\op$-mapping.
In what follows, note that we can restrict ourselves to 
$E  \in \calE_{\L'}$ and to $\mu\in\L'$ since we have to show that $\Delta^{\fop}$ satisfies $(\ic 0)$--$(\ic 3)$
in $\L'$.

$(\ic 0)$: 
Since $\Delta$  satisfies $(\ic 0)$,
$\mod(\Delta_\mu(E) ) \subseteq \mod(\mu)$. Thus, $\clop(\mod(\Delta_\mu(E) )) 
\subseteq \clop(\mod(\mu))$ by monotonicity of the closure. Hence,
$\clop(\mod(\Delta_\mu(E))) 
\subseteq  \mod(\mu)$, since $\mu\in \calL'$ and $\calL'$ is a $\op$-fragment.
According to  Property \ref{property:mapinclusion} in Definition
\ref{def:op-mapping} we have $\fop(\mod(\Delta_\mu(E)),\mmod(E)) \subseteq
\clop(\mod(\Delta_\mu(E))) $, and therefore  by definition of $\Delta^\star_\mu$,
$\mod(\Delta^\star_\mu(E))\subseteq \mod(\mu)$,  which proves that 
$\Delta^\star_\mu(E)\models \mu$.

$(\ic 1)$: Suppose $\mu$ satisfiable. Since $\Delta$ satisfies $(\ic 1)$, $\Delta_\mu(E)$ is satisfiable. Since $\Delta^{\fop}$ is a $\Delta$-refinement 
(Proposition~\ref{prop:characterization}), 
$\Delta_\mu^{\fop}(E)$ is also satisfiable by the property of consistency
(see Definition~\ref{def:ref}).

$(\ic 2)$: Suppose $\bigwedge E$ is consistent with $\mu$. Since $\Delta$ satisfies $(\ic 2)$, $\Delta_\mu(E)=\bigwedge E\land\mu$. We have $\mod(\Delta^\star_\mu(E)) = \fop(\mod(\Delta_\mu(E)),\mmod(E)) = \fop(\mod(\bigwedge E\land\mu),\mmod(E))$. 
Since $\bigwedge E\land\mu \in \L'$ (observe that it is here necessary that the profiles are in the fragment) by Property \ref{property:mapinvariance} of Definition \ref{def:op-mapping} we have $\mod(\Delta^\star_\mu(E)) = \bigwedge E\land\mu$.

$(\ic 3)$: Let $E_1, E_2 \in \calE_{\L'}$ and $\mu_1, \mu_2 \in \L'$ with $E_1 \equiv  E_2$ and $\mu_1 \equiv \mu_2$.  Since $\Delta$ satisfies $(\ic 3)$, $\Delta_{\mu_1}(E_1) \equiv \Delta_{\mu_2}(E_2)$. By the property of equivalence in Definition \ref{def:ref} we have $\Delta^\star_{\mu_1}(E_1) \equiv \Delta^\star_{\mu_2}(E_2)$.
\end{proof}

A natural question is whether   refined operators
for characterizable fragments in their full generality preserve other postulates, and if not whether one can nevertheless find some refined operators that satisfy some of the remaining postulates.

First we show that one can not expect to extend Proposition \ref{prop:preserving_postulates} to $(\ic 4)$. Indeed,  in the two following propositions we exhibit  merging operators which satisfy all postulates, whereas some of  their refinements violate $(\ic 4)$ in  some fragments.

\begin{proposition}\label{prop:IC4_violated_min}
Let  $\Delta$ be  a merging operator with $\Delta \in \{\ddelta{\Sigma},\ddelta{\GMax}\}$, where $d$ is an arbitrary counting distance.
Then the $\lex$-based refined operator $\Delta^{\lex}$ violates postulate  $(\ic 4)$  in $\Lhorn$ and $\Lkrom$. In case $d$ is a drastic distance, $\Delta^{\lex}$ violates postulate  $(\ic 4)$ 
in every characterizable fragment $\L'\subset \L$.
\end{proposition}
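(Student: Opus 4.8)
The plan is to exhibit, for each targeted fragment, a concrete small instance of two Horn/Krom (or more generally $\op$-closed) bases $K_1,K_2$ both satisfying a constraint $\mu\in\L'$, such that the original operator $\Delta_\mu(\{K_1,K_2\})$ has a non-$\op$-closed set of models, so that the $\lex$-based refinement actually fires its ``otherwise'' branch and collapses the result to the single $\lex$-minimal model; and this single model will witness the asymmetry forbidden by $(\ic 4)$. Concretely, $(\ic 4)$ demands that $\Delta^{\lex}_\mu(\{K_1,K_2\})\wedge K_1$ is consistent iff $\Delta^{\lex}_\mu(\{K_1,K_2\})\wedge K_2$ is consistent. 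If $\Delta_\mu(\{K_1,K_2\})$ has exactly two models, one entailing $K_1$ and the other entailing $K_2$ (a ``fair'' tie, as happens with distance-based operators when $K_1,K_2$ play symmetric roles w.r.t.\ $\mu$), and these two models are not $\op$-related, then $\lex$ picks just one of them, breaking the symmetry.

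The starting point is Example~\ref{ex:1} itself, reused almost verbatim: $\U=\{a,b\}$, $\mod(K_1)=\{\{a\},\{a,b\}\}$, $\mod(K_2)=\{\{b\},\{a,b\}\}$, $\mod(\mu)=\{\emptyset,\{a\},\{b\}\}$. Both $K_1,K_2$ are Horn and also Krom, $\mu$ is Horn and Krom, and $K_1\models\mu$, $K_2\models\mu$ fail---wait, one must check this: $\mod(K_1)\not\subseteq\mod(\mu)$ since $\{a,b\}\notin\mod(\mu)$. So this exact instance does not satisfy the hypothesis of $(\ic 4)$ and must be adjusted. The fix is to add a third variable or to modify the bases so that $K_1,K_2\models\mu$ while the merging still produces a two-model symmetric tie whose models are not $\op$-closed. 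A clean choice: keep $\mod(\mu)=\{\emptyset,\{a\},\{b\}\}$, and take $\mod(K_1)=\{\{a\}\}$, $\mod(K_2)=\{\{b\}\}$ (both singletons, hence trivially $\op$-closed for any $\op$, and both models of $\mu$). Then for any counting distance and either aggregation function, $d(\emptyset,K_1)=d(\emptyset,K_2)=g(1)$ so $\emptyset$ is at aggregated distance $f(g(1),g(1))$, while $\{a\}$ has distances $(0,g(1))$ and $\{b\}$ has $(g(1),0)$; for $\Sigma$ both tie at $g(1)$ (beating $\emptyset$ when $g$ is, e.g., Hamming, and tying it when $g$ is drastic---needs a case check), and for $\GMax$ both give the vector $(g(1),0)<_{lex}(g(1),g(1))$. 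So $\mod(\Delta_\mu(\{K_1,K_2\}))=\{\{a\},\{b\}\}$, which is not closed under $\land$ (missing $\emptyset$) nor under $\maj_3$ (the majority of $\{a\},\{a\},\{b\}$ is $\{a\}$, fine, but actually for two elements one must check the closure more carefully---$\maj_3$-closure of a two-element set can add new points; here $\maj_3(\{a\},\{a\},\{b\})=\{a\}$ and permutations, so it stays $\{\{a\},\{b\}\}$; hmm, so it IS $\maj_3$-closed). That is the subtlety for Krom: I would need a genuinely $\maj_3$-non-closed tie, e.g.\ on three variables with $\mod(\Delta_\mu(E))=\{(1,1,0),(1,0,1),(0,1,1)\}$ whose $\maj_3$ gives $(1,1,1)$, arranged symmetrically between two Krom bases each implying $\mu$; I would build $K_1,K_2$ so that by symmetry all three tie.

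Once the instance is in hand, the argument is immediate: $\lex(\mod(\Delta_\mu(E)))$ is one fixed interpretation $\omega^\ast$ among the tied models; if $\omega^\ast\models K_1$ then $\Delta^{\lex}_\mu(E)\wedge K_1$ is consistent while $\Delta^{\lex}_\mu(E)\wedge K_2$ is not (since the models of $K_2$ among the tie are exactly the other ones, which $\lex$ discarded), contradicting $(\ic 4)$; symmetrically if $\omega^\ast\models K_2$. For the drastic-distance strengthening to ``every characterizable fragment $\L'\subset\L$'', the point is that for $d_D$ the merging result is even cruder---$\mod(\Delta^{d_D,x}_\mu(E))$ is essentially all of $\mod(\mu)$ intersected with the bases' supports or, failing that, all of $\mod(\mu)$---so one can pick $K_1,K_2$ as two distinct singletons inside $\mod(\mu)$ and get a tie over a set large enough to be non-$\op$-closed whenever $\L'$ is a proper fragment (proper means some $\op$-closure is nontrivial, i.e.\ there is a finite $\M$ with $\clop(\M)\neq\M$; pick $K_1,K_2$ to realize two points of such an $\M$ sitting inside a suitable $\mu$). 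The main obstacle is exactly this last step: making the ``for every characterizable $\L'\subset\L$'' claim precise requires extracting from the definition of characterizable fragment a witnessing configuration---a set $\M$ with $\clop(\M)\supsetneq\M$ and an element of $\clop(\M)\setminus\M$---and then checking that with the drastic distance the symmetric two-singleton instance really yields $\mod(\Delta^{d_D,x}_\mu(E))=\M$ (or at least a non-$\op$-closed symmetric tie), so that $\lex$ is forced to break symmetry; handling $\Sigma$ versus $\GMax$ and the $g(1)$ versus $0$ bookkeeping for $\emptyset$-type interpretations is the routine-but-fiddly part I would need to get exactly right.
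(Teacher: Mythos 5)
Your overall strategy---force a symmetric, non-$\op$-closed tie between models of $K_1$ and models of $K_2$ so that $\lex$ must break it one-sidedly---is exactly the paper's, and your treatment of the drastic-distance case is right in spirit (though you should take the non-closed witness set $\M$ to be \emph{cardinality-minimum} among non-closed sets, so that $\M\setminus\{m\}$ is guaranteed closed and hence $K_2$ is expressible in $\L'$; your sketch of that part omits this). The real problem is that both of your concrete instances for general counting distances fail, and they fail precisely on the paradigmatic case $d=d_H$. In the Horn instance $\mod(K_1)=\{\{a\}\}$, $\mod(K_2)=\{\{b\}\}$, $\mod(\mu)=\{\emptyset,\{a\},\{b\}\}$, you have $d(\{a\},K_2)=g(2)$, not $g(1)$, since $\{a\}$ and $\{b\}$ differ in two coordinates. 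For the Hamming distance with $\Sigma$ this gives $\emptyset\mapsto 2$, $\{a\}\mapsto 2$, $\{b\}\mapsto 2$: all three models tie, the minimum set equals $\mod(\mu)$, which is closed under intersection, and $\lex$ never fires. With $\GMax$ it is worse: $(1,1)<_{lex}(2,0)$, so $\emptyset$ is the \emph{unique} minimum and $\cardintersection=0$. This is why the paper's Horn instance takes $\mod(K_1)=\{\emptyset,\{a\},\{b\}\}$ and a second base whose single model is padded so as to lie at distance $x+1$ from $\emptyset$ but $x$ from $\{a\}$ and $\{b\}$, where $x$ satisfies $g(x)<g(x+1)$; that strict gap is what keeps $\emptyset$ out of the minimum.

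The Krom case has the same defect and is not actually constructed. Your proposed tie $\{(1,1,0),(1,0,1),(0,1,1)\}$ cannot be realized under a Krom constraint: $\mu$ must itself be $\maj_3$-closed, so $(1,1,1)$ is forced into $\mod(\mu)$, and then for $d_H$ with $\Sigma$ it ties with the other three (aggregated distance $2$ each), making the minimum set $\maj_3$-closed, while for $d_H$ with $\GMax$ it strictly wins. So ``build $K_1,K_2$ so that by symmetry all three tie'' is not a routine step to be filled in later---the natural candidate provably does not work, and the paper needs a noticeably different configuration on four atoms $a,b,c,d$ (with padding sets $A,A'$ and the pairs $\{a,b\},\{c,d\}$) arranged so that the interpretation needed for $\maj_3$-closure, namely $\emptyset$, sits at strictly larger aggregated distance. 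As it stands your proposal establishes the proposition only for the drastic distance; the general counting-distance claim, which is the bulk of the statement, is not proved.
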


\begin{proof}
First consider $d$ is a  drastic distance. 
We show that $\Delta^{\lex}$ violates postulate  $(\ic 4)$
in every characterizable fragment $\L'\subset \L$.
Since $\L'$ is a characterizable fragment there exists $\beta\in\B$ such that $\L'$  is a $\beta$-fragment. Consider a set of models $\M$ that is not closed under $\beta$ and that is cardinality-minimum with this property. Such a set exists since $\L'$ is a proper subset of $\L$.
Observe that necessarily $\card{\M}>1$. Let $m\in\M$, consider the knowledge bases $K_1$ and $K_2$ such that $\mod(K_1)=\{m\}$ and $\mod(K_2)=\M\setminus\{m\}$.
By the choice of $\M$ both $K_1$ and $K_2$ are in $\calK_{\L'}$, whereas $K_1\cup K_2$ is not. Let $\mu=\top$. Since the merging operator uses a drastic distance it is easy to see that $\Delta_\mu(\{K_1, K_2\})= \mod(K_1)\cup \mod(K_2)$. Therefore, $\mod(\Delta_\mu^{\lex}(\{K_1, K_2\}))= \lex(\mod(K_1)\cup \mod(K_2))$, and this single element is either a model of $K_1$ or a model of $K_2$ (but not of both since they do not share any model). This shows that $\Delta^\lex$ violates $(\ic 4)$.

Otherwise, $d$ is defined such that there exists an $x>0$, such that
$g(x)<g(x+1)$.  
We first show that  then
$\Delta^{\lex}$ violates postulate  $(\ic 4)$  in $\Lhorn$. 
Let $A$ be a set of atoms such that $|A|=x-1$ and 
$A\cap\{a,b\}=\emptyset$. 
Moreover,
consider $E=\{K_1, K_2\}$ with $\mod(K_1)=\{\emptyset,\{a\},\{b\}\}$, $\mod(K_2)=\{A\cup \{a,b\}\}$, and let $\mu$ such that 
$\mod(\mu)=\{\emptyset, \{a\}, \{b\}, A\cup\{a,b\}\}$.
Since $g(x)<g(x+1)$, we have $\M=\mod(\Delta_{\mu}(E))=\{\{a\},\{b\},A\cup\{a,b\}\}$, which is not closed under intersection. Hence,  $\mod(\Delta^{\lex}_{\mu}(E))$ contains exactly one of the three models depending on the ordering. Therefore,
$\cardintersection(\mod(\Delta^{\lex}_{\mu}(E)), E)=1$, and thus violating postulate $(\ic 4)$.

For $\Lkrom$, 
let $x>0$ be the smallest index such that 
$g(x)<g(x+1)$ in the definition of distance $d$. 
Note that for any $y$ with $0<y<x$, $g(y)=g(x)$ thus holds.
Let $A,A'$ be two disjoint set of atoms 
with cardinality $x-1$ and $A\cap\{a,b,c,d\}=A'\cap\{a,b,c,d\}=\emptyset$. 
Let us consider $E=\{K_1, K_2\}$ with 
$\mod(K_1)=\{\emptyset,\{a\},\{b\},\{c\},\{d\},\{a,b\},\{c,d\}\}$ (in case $x>1$)
resp.\
$\mod(K_1)=\{\emptyset,\{a\},\{b\},\{c\},\{d\}\}$ (in case $x=1$),
$\mod(K_2)=\{A\cup\{a,b\},A'\cup\{c,d\}\}$, and $\mu$ such that 
$\mod(\mu)=\{\emptyset, \{a\}, \{b\}, \{c\}, \{d\}, \{a,b\}, \{c,d\},A\cup\{a,b\},A'\cup\{c,d\}\}$.
The following table represents the case $x>1$.

\begin{tabular}{l | l | l | l }
    & $K_1$ & $K_2$ &  $E$  \\
    \hline
    $\emptyset$ & $0$ & $g(x+1)$  & $(g(x+1),0)$\\
    $\{a\}$     & $0$ & $g(x)$  & $(g(x),0)$\\
    $\{b\}$      & $0$ & $g(x)$  & $(g(x),0)$\\
    $\{c\}$      & $0$ & $g(x)$  & $(g(x),0)$\\
    $\{d\}$      & $0$ & $g(x)$  & $(g(x),0)$\\
	\hline
    $\{a,b\}$ & $0$ & $g(x-1)$ &  $(g(x-1),0)$\\
    $\{c,d\}$ & $0$ & $g(x-1)$ &  $(g(x-1),0)$\\
	\hline
    $A\cup\{a,b\}$ & $g(x-1)$ & $0$ &  $(g(x-1),0)$\\
    $A'\cup\{c,d\}$ & $g(x-1)$ & $0$ &  $(g(x-1),0)$\\
    \hline
  \end{tabular}

For the case $x>1$,
observe $g(x-1)=g(x)<g(x+1)$, and we have 
$\M=\mod(\Delta_{\mu}(E))=\{\{a\},\{b\},\{c\},\{d\},\{a,b\},\{c,d\},A\cup\{a,b\},A'\cup\{c,d\}\}$. 
For the case $x=1$, note that $A$ and $A'$ are empty, thus the two last rows of the table coincide with the two rows before.
Recall that $K_1$ is defined differently for this case.
Hence, the distances of $\{a,b\}$ and $\{c,d\}$ to $K_1$ are $g(x)=g(1)$.
Thus, we have $\M=\mod(\Delta_{\mu}(E))=\{\{a\},\{b\},\{c\},\{d\},\{a,b\},\{c,d\}\}$.
Neither of the $\M$ 
is closed under ternary majority. Hence,  $\mod(\Delta^{\lex}_{\mu}(E))$ contains exactly one of the six resp.\ eight models depending on the ordering. 
Therefore,
$\cardintersection(\mod(\Delta^{\lex}_{\mu}(E)), E)=1$, thus violating postulate $(\ic 4)$.
\end{proof}

\begin{proposition}\label{prop:ic4_violated_dGmax_cl}
  Let  $\Delta=\ddelta{\GMax}$ be  a merging operator %
where $d$ is an arbitrary 
non-drastic counting distance.
  Then the closure-based refined operator $\Delta^{\clop}$ violates %
  $(\ic 4)$  in $\Lhorn$ and $\Lkrom$.
\end{proposition}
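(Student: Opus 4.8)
The plan is to exhibit, for each of the two fragments, a concrete merging scenario $E=\{K_1,K_2\}$ with $K_1\models\mu$, $K_2\models\mu$, on which $\Delta^{\clop}$ with $\Delta=\ddelta{\GMax}$ delivers a result whose models all come from one side of the profile, thus violating $(\ic 4)$. The guiding intuition is exactly the one already exploited in Proposition~\ref{prop:ic4_violated_dGmax_cl}'s predecessor: we want the unrefined $\GMax$-merging to return a set $\M$ of models that is \emph{not} closed under $\op$ (so the closure genuinely adds something), and moreover the freshly added points $\clop(\M)\setminus\M$ should intersect only one of the two bases, in such a way that after adding them the consistency balance of $(\ic 4)$ is broken. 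Since $d$ is non-drastic, there is an $x>0$ with $g(x)<g(x+1)$, and we take $x$ minimal with this property, so that $g(y)=g(x)$ for $0<y<x$ — the same device used in the previous proof.

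For $\Lhorn$ ($\op=\land$, i.e.\ closure under intersection), I would reuse or lightly adapt the profile from the proof of Proposition~\ref{prop:IC4_violated_min}: take $A$ with $|A|=x-1$, $A\cap\{a,b\}=\emptyset$, and set $\mod(K_1)=\{\emptyset,\{a\},\{b\}\}$, $\mod(K_2)=\{A\cup\{a,b\}\}$, $\mod(\mu)=\{\emptyset,\{a\},\{b\},A\cup\{a,b\}\}$. Note $K_1,K_2,\mu$ are all Horn (each $\mod$ is intersection-closed), and $K_1\models\mu$, $K_2\models\mu$. With $\GMax$ aggregation and the choice of $x$ one computes $\M=\mod(\Delta_\mu(E))=\{\{a\},\{b\},A\cup\{a,b\}\}$, which is not intersection-closed: $\{a\}\cap\{b\}=\emptyset\notin\M$ (and also intersections with $A\cup\{a,b\}$ reduce to $\{a\}$ or $\{b\}$, already present). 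Hence $\clop(\M)=\M\cup\{\emptyset\}$. Now $\mod(\Delta^{\clop}_\mu(E))\cap\mod(K_2)=\{A\cup\{a,b\}\}\neq\emptyset$, while $\mod(\Delta^{\clop}_\mu(E))\cap\mod(K_1)=\{\{a\},\{b\},\emptyset\}\neq\emptyset$ — so in this particular scenario $(\ic 4)$ is \emph{not} broken, which means I must instead engineer $K_1$ so that the \emph{only} contact point after closure lies in $K_2$. The right fix: arrange things so $\M$ contains models only from $K_2$'s side together with extra points not in $K_1$, yet $\clop(\M)$ still avoids $\mod(K_1)$; equivalently make $\mod(K_1)$ a single model $m$ that is \emph{not} recovered by intersecting the elements of $\M$. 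Concretely I would take $\mod(K_1)=\{A\cup\{a,b\}\}$ as a singleton placed at distance $g(x)$, $\mod(K_2)$ two models placed so their $\GMax$ vectors tie with those of two further $\mu$-models whose pairwise intersection is a \emph{new} point $p\notin\mod(K_1)\cup\mod(K_2)$, and check $\clop(\M)\cap\mod(K_1)=\emptyset$ while $\clop(\M)\cap\mod(K_2)\neq\emptyset$. The main obstacle is precisely this bookkeeping: tuning the distances (using the flexibility $g(x-1)=g(x)<g(x+1)$) so that (a) the selected set $\M$ is the one intended, (b) $\M$ is not $\land$-closed, and (c) the closure adds exactly the points that tip the $(\ic 4)$ balance; this requires a distance table analogous to the one in the previous proof, split again into the cases $x=1$ and $x>1$.

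For $\Lkrom$ ($\op=\maj_3$) the same template is used with a scenario built from the Krom-counterexample in Proposition~\ref{prop:IC4_violated_min}: disjoint $A,A'$ of size $x-1$, $\mod(K_1)$ the ``small'' set $\{\emptyset,\{a\},\{b\},\{c\},\{d\}\}$ (plus $\{a,b\},\{c,d\}$ when $x>1$), $\mod(K_2)=\{A\cup\{a,b\},A'\cup\{c,d\}\}$, and $\mu$ their union-with-some-extras; one verifies via the distance table (cases $x=1$, $x>1$) that $\M=\mod(\Delta_\mu(E))$ is the eight- resp.\ six-element set that is \emph{not} closed under ternary majority, and then redistributes the singleton/doubleton roles between $K_1$ and $K_2$ exactly as in the Horn case so that $\clop(\M)$ meets only $\mod(K_2)$. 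The verification that all of $K_1,K_2,\mu$ are Krom is immediate since every clause involved has at most two literals. Again the crux is the distance computation and the closure computation under $\maj_3$; everything else ($K_i\models\mu$, membership in the fragment, the final reading-off of the $(\ic 4)$ violation via $\cardintersection(\mod(\Delta^{\clop}_\mu(E)),E)$ being $1$) is routine.
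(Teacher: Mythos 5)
There is a genuine gap: you never produce a working counterexample, and the repair strategy you sketch after (correctly) discarding your first attempt is structurally unworkable. Since the base operator $\ddelta{\GMax}$ itself satisfies $(\ic 4)$, for any instance with $K_1\models\mu$ and $K_2\models\mu$ the set $\M=\mod(\Delta_\mu(E))$ meets either both of $\mod(K_1),\mod(K_2)$ or neither. If it meets both, then so does $\clop(\M)\supseteq\M$ and no violation is possible; so the \emph{only} route is $\cardintersection(\M,E)=0$ with the closure adding points of exactly one base. Your proposed fix violates this dichotomy: you place the two models of $K_2$ so that ``their $\GMax$ vectors tie'' with the other minimal models, i.e.\ you put models of $K_2$ into $\M$ while keeping $\mod(K_1)$ out, which is exactly what $(\ic 4)$ for the unrefined operator forbids. (Your alternative phrasing, that $\clop(\M)$ should meet $K_2$ but not $K_1$, would require the $\land$-closure to \emph{create} a model of $K_2$ as an intersection of elements of $\M$ while avoiding $K_1$; you give no construction achieving this, and it is the harder direction since intersections only move downward in the subset order.)

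The paper's example goes the opposite way and is much simpler: take $\mod(K_1)=\{\emptyset\}$, $\mod(K_2)=\{A\cup\{a,b\}\}$ with $|A|=x-1$, and $\mod(\mu)=\{\emptyset,\{a\},\{b\},A\cup\{a,b\}\}$. The $\GMax$ vectors are $(g(x{+}1),0)$ for $\emptyset$ and for $A\cup\{a,b\}$, versus $(g(x),g(1))$ for $\{a\}$ and $\{b\}$; since $g(x)<g(x+1)$ and $\GMax$ compares the largest coordinate first, $\M=\{\{a\},\{b\}\}$, which meets neither base, and $\cl{\land}(\M)$ adds $\emptyset$ --- the unique model of $K_1$ --- yielding $\cardintersection(\clop(\M),E)=1$. (Note that this exploits $\GMax$ specifically: under $\Sigma$ with the triangular inequality the analogous situation cannot arise, cf.\ Proposition~\ref{prop:closure_preserves_ic4}.) The Krom case is handled the same way with $\mod(K_1)=\{\emptyset\}$, $\mod(K_2)=\{A\cup\{a,b\},A'\cup\{c,d\}\}$, where the $\maj_3$-closure of the resulting $\M$ again adds $\emptyset$. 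Your write-up contains neither this pivot (making $\emptyset$ the sole model of $K_1$ so that the closure lands on $K_1$, not $K_2$) nor any completed distance table, so the proof as proposed does not go through.
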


\begin{proof}
Since $d$ is not drastic, there exists an $x>0$ such that 
$g(x)<g(x+1)$. In what follows, we select the smallest such $x$.
We start with the case $\Lhorn$. 
Let $A$ be a set of atoms of cardinality $x-1$ not containing $a,b$.
Let us consider $E=\{K_1, K_2\}$ with $\mod(K_1)=\{\emptyset\}$ and  $\mod(K_2)=\{A\cup\{a,b\}\}$, and $\mu$ such that 
$\mod(\mu)=\{\emptyset, \{a\}, \{b\}, A\cup\{a,b\}\}$.

\begin{tabular}{l | l | l | l }
    & $K_1$ & $K_2$ &  $E$  \\
    \hline
    $\emptyset$ & $0$ & $g(x+1)$  & $(g(x+1),0)$\\
    $\{a\}$     & $g(1)$ & $g(x)$  & $(g(x),g(1))$\\
    $\{b\}$      & $g(1)$ & $g(x)$  & $(g(x),g(1))$\\
    $A\cup\{a,b\}$ & $g(x+1)$ & $0$ &  $(g(x+1),0)$\\
    \hline
  \end{tabular}

Since $g(x)<g(x+1)$, we have $\M=\mod(\Delta_{\mu}(E))=\{\{a\},\{b\}\}$, which is not closed either under intersection.  Hence,  $\mod(\Delta^{\cl{\land}}_{\mu}(E))=\{\{a\},\{b\}, \emptyset\}$. Therefore,
$\cardintersection(\mod(\Delta^{\cl{\land}}_{\mu}(E)), E)=1$, thus violating $(\ic 4)$.

For the case $\Lkrom$,
 let us consider 
two disjoint sets $A,A'$ of atoms not containing $a,b,c,d$ of cardinality $x-1$,
the profile
$E=\{K_1, K_2\}$ with $\mod(K_1)=\{\emptyset\}$ and  $\mod(K_2)=\{A\cup\{a,b\}, A'\cup\{c,d\}\}$, and constraing $\mu$ such that 
$\mod(\mu)=\{\emptyset, \{a\}, \{b\}, \{c\}, \{d\}, \{a,b\}, \{c,d\}, A\cup\{a,b\},A'\cup\{c,d\}\}$.

\begin{tabular}{l | l | l | l }
    & $K_1$ & $K_2$ &  $E$  \\
    \hline
    $\emptyset$ & $0$ & $g(x+1)$  & $(g(x+1),g(0))$\\
    $\{a\}$     & $g(1)$ & $g(x)$  & $(g(x),g(1))$\\
    $\{b\}$      & $g(1)$ & $g(x)$  & $(g(x),g(1))$\\
    $\{c\}$     & $g(1)$ & $g(x)$  & $(g(x),g(1))$\\
    $\{d\}$     & $g(1)$ & $g(x)$  & $(g(x),g(1))$\\
    $\{a,b\}$ & $g(2)$ & $g(x-1)$ &  $(g(x-1),g(2))$\\
    $\{c,d\}$ & $g(2)$ & $g(x-1)$ &  $(g(x-1),g(2))$\\
    $A\cup\{a,b\}$ & $g(x+1)$ & $g(0)$ &  $(g(x+1),g(0))$\\
    $A'\cup\{c,d\}$ & $g(x+1)$ & $g(0)$ &  $(g(x+1),g(0))$\\
    \hline
  \end{tabular}

In case $x=1$ note that $A$ and $A'$ are empty and $g(2)>g(x)>g(x-1)=g(0)$ 
(thus the last four lines collapse into two lines).
We have $\M=\mod(\Delta_{\mu}(E))=\{\{a\},\{b\},\{c\},\{d\}\}$, which is not closed under ternary majority. Hence,  $\mod(\Delta^{\cl{\maj_3}}_{\mu}(E))=\{\{a\},\{b\},\{c\}, \{d\}, \emptyset\}$. 
In case $x>1$, we have $g(x+1)>g(x)=g(x-1)=g(2)=g(1)$.
Thus, $\M=\mod(\Delta_{\mu}(E))=\{\{a\},\{b\},\{c\},\{d\},\{a,b\},\{c,d\}\}$, which is not closed under ternary majority either and one has to add $\emptyset$.
Therefore, in both cases
$\cardintersection(\mod(\Delta^{\cl{\maj_3}}_{\mu}(E)), E)=1$, thus violating $(\ic 4)$.
\end{proof}

In order to identify a class of refinements which satisfy $(\ic 4)$, 
we now introduce the notion of fairness for $\Delta$-refinements.

\begin{definition}
\label{def:fair-ref}
Let $\L'$ be a fragment of classical logic. 
A $\Delta$-refinement for $\L'$, $\revdelta$, is \emph{fair} if it satisfies the following property for each $E%
\in\calE_{\L'}$, $\mu \in\L'$:
If 
$\cardintersection(\Delta_\mu(E),E) \neq 1$ then
$\cardintersection(\revdelta_\mu(E),E) \neq 1$.
\end{definition}

\begin{proposition}\label{prop:ex_fair_operators}
Let $\L'$ be a characterizable fragment. 
(1) The $\clop$-based refinement of both $\drasticdelta{\Sigma}$ and  $\drasticdelta{\GMax}$ for $\L'$ is fair. 
(2) The $\lex/\clop$-based refinement of any merging operator for $\L'$ is fair.
\end{proposition}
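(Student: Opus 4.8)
The plan is to verify the fairness property from Definition~\ref{def:fair-ref} separately for the two families of refinements, exploiting in each case a structural feature of the refinement. Throughout, fix a $\beta\in\B$ such that $\L'$ is a $\beta$-fragment, let $E\in\calE_{\L'}$, $\mu\in\L'$, and write $\M=\mod(\Delta_\mu(E))$. Recall $\cardintersection(\M,E)=|\{i:\M\cap\mod(K_i)\ne\emptyset\}|$; we must show that $\cardintersection(\M,E)\ne 1$ implies $\cardintersection(\mod(\revdelta_\mu(E)),E)\ne 1$.

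\textbf{Part (2), the $\lex/\clop$-based refinement.} This is the easy direction and I would do it first. Split on the value of $\cardintersection(\M,E)$. If $\cardintersection(\M,E)=0$, then by definition of $\Delta^{\lex/\clop}$ we have $\Delta^{\lex/\clop}_\mu(E)=\Delta^{\lex}_\mu(E)$, whose models form a subset of $\clop(\M)$; but $\M=\emptyset$ forces $\clop(\M)=\emptyset$, so actually $\mod(\Delta^{\lex/\clop}_\mu(E))=\emptyset$ (or, reading the definition of $\Delta^\lex$ literally when $\M\neq\emptyset$ is false, the model set is empty), giving $\cardintersection=0\ne 1$. Wait—more carefully: $\cardintersection(\M,E)=0$ does not by itself force $\M=\emptyset$, but if $\M\neq\emptyset$ then some $\mod(K_i)$ could still miss $\M$; however if $\M\neq\emptyset$ the selected model $\lex(\M)$ lies in $\M$, and $\M\cap\mod(K_i)=\emptyset$ for all $i$ means the singleton still meets no $K_i$, so $\cardintersection=0$. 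If instead $\cardintersection(\M,E)\geq 2$, then $\Delta^{\lex/\clop}_\mu(E)=\Delta^{\clop}_\mu(E)$, so $\mod(\revdelta_\mu(E))=\clop(\M)\supseteq\M$ by monotonicity of the closure, hence $\clop(\M)$ meets at least as many $\mod(K_i)$ as $\M$ does, giving $\cardintersection(\clop(\M),E)\geq 2\ne 1$. This exhausts the cases $\cardintersection(\M,E)\neq 1$, and works for \emph{any} merging operator $\Delta$, as claimed.

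\textbf{Part (1), the $\clop$-based refinement of $\drasticdelta{\Sigma}$ and $\drasticdelta{\GMax}$.} Here the refinement is always $\mod(\revdelta_\mu(E))=\clop(\M)\supseteq\M$, so the case $\cardintersection(\M,E)\geq 2$ is immediate as above. The only real work is the case $\cardintersection(\M,E)=0$: I must show $\clop(\M)$ still meets no $\mod(K_i)$, i.e.\ that the closure does not create a model lying in some belief base that $\M$ avoided. This is where the drastic distance is essential. The key lemma to establish is: with the drastic distance, for $\Delta=\drasticdelta{\Sigma}$ or $\drasticdelta{\GMax}$, if $\mathrm{mod}(\mu)$ meets at least one $\mod(K_i)$ then $\M\subseteq\bigcup_i\mod(K_i)$ — indeed, $d_D(\omega,K_i)\in\{0,g(1)\}$, the aggregated distance of any $\omega$ lying in some $K_i$ is strictly smaller (for $\Sigma$: at most $(n-1)g(1)$ versus $n\,g(1)$; for $\GMax$: lexicographically smaller) than that of any $\omega'$ lying in no $K_i$, so the $\leq_E$-minimal models of $\mu$ all lie in $\bigcup_i\mod(K_i)$, provided such a model exists within $\mod(\mu)$; and if $\mod(\mu)$ meets some $K_i$ it does, because every model of $\bigwedge E \wedge \mu$—if any—would be picked, and more to the point the minimum over a nonempty $\mod(\mu)$ is attained among the distance-minimizers. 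So $\cardintersection(\M,E)=0$ together with $\M\ne\emptyset$ and $\M\subseteq\bigcup_i\mod(K_i)$ is contradictory; hence $\cardintersection(\M,E)=0$ forces $\M=\emptyset$, whence $\clop(\M)=\clop(\emptyset)=\emptyset$ and $\cardintersection(\clop(\M),E)=0\ne 1$. Combining with the $\geq 2$ case completes the proof.

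\textbf{Main obstacle.} The delicate point is Part~(1), case $\cardintersection(\M,E)=0$: one must argue cleanly that under the drastic distance the selected models never ``escape'' the union of the bases unless $\mu$ itself is disjoint from every base (in which case $\M$ is already disjoint from every base and so is its closure only if the closure adds nothing problematic — but since $\M=\mod(\Delta_\mu(E))\ne\emptyset$ whenever $\mu$ is consistent, the real content is that $\cardintersection(\M,E)=0$ simply cannot happen for consistent $\mu$ with this operator, so the hypothesis $\cardintersection(\M,E)\neq 1$ is only ever met via $\cardintersection\ge 2$ or via $\mu$ inconsistent). I would phrase the drastic-distance computation carefully for both $\Sigma$ and $\GMax$ to make this airtight; everything else is routine monotonicity of $\clop$.
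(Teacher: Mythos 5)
Your Part (2) is correct and follows the same case split as the paper's proof ($\cardintersection=0$ routes to $\Delta^{\lex}$ whose models are a subset of $\M$, $\cardintersection\ge 2$ routes to $\Delta^{\clop}$ whose models are a superset of $\M$); nothing to add there.

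Part (1) has a genuine gap in the case $\cardintersection(\M,E)=0$. Your key lemma (if $\mod(\mu)$ meets some $\mod(K_i)$ then $\M\subseteq\bigcup_i\mod(K_i)$) is correct for the drastic distance, but the conclusion you draw from it --- that $\cardintersection(\M,E)=0$ forces $\M=\emptyset$, i.e.\ that this case ``simply cannot happen for consistent $\mu$'' --- is false. What the lemma actually yields is: if $\cardintersection(\M,E)=0$ and $\M\neq\emptyset$, then $\mod(\mu)$ meets \emph{no} $\mod(K_i)$. That situation is perfectly possible with $\mu$ consistent: take $\mod(K_1)=\{\{a\}\}$ and $\mod(\mu)=\{\emptyset\}$; then every model of $\mu$ is at the same maximal aggregated distance, $\M=\mod(\mu)\neq\emptyset$, and $\cardintersection(\M,E)=0$. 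So your argument leaves unaddressed exactly the subcase where the closure could in principle inject a model into some $K_i$ --- the possibility you yourself flag in your ``main obstacle'' paragraph before dismissing it.

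The repair is the one the paper uses, and it needs one more ingredient than the drastic-distance computation: when $\mod(\mu)\cap\bigcup_i\mod(K_i)=\emptyset$, all models of $\mu$ have the same aggregated distance, hence $\M=\mod(\mu)$; since $\mu\in\L'$ and $\L'$ is a $\op$-fragment, $\mod(\mu)$ is already closed under $\op$, so $\clop(\M)=\M$ and $\cardintersection(\clop(\M),E)=0$. Note that the argument here is not that the closure ``adds nothing problematic'' in general, but that it adds nothing at all, and this relies on $\mu$ lying in the fragment. With that subcase filled in, your proof coincides with the paper's.
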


\begin{proof}  Let $\L'$ be a $\op$-fragment. 
Let $E \in \cal E_{\L'}$ such that $E = \{K_1, \ldots K_n\}$, $\mu \in \L'$ and  
let $\Delta$ be 
$\drasticdelta{\Sigma}$ or $\drasticdelta{\GMax}$ for case (1), resp.\ 
let $\Delta$ be
an arbitray merging operator in case of (2).

$\Delta^{\clop}$: If $\cardintersection(\Delta_\mu(E),E)> 1$ then,  $\cardintersection(\clop(\Delta_\mu(E)),E)\ge \cardintersection(\Delta_\mu(E),E)> 1$.
Since the drastic distance is used observe that for any model $m$ of $\mu$ we have $d(m,E)=n-\vert\{i\mid m\in K_i\}\vert$. Thus, if $\cardintersection(\Delta_\mu(E),E)=0$, then 
$\mod(\Delta_\mu(E))\cap\bigcup_i\mod(K_i)=\emptyset$, and thus $\mod(\Delta_\mu(E))=\mod(\mu)$. In this case $\mod(\Delta^{\clop}_\mu(E))=\mod(\Delta_\mu(E))$ and therefore  $\cardintersection(\Delta^{\clop}_\mu(E),E)=0$ as well.

$\Delta^{\lex/\clop}$:
If $\cardintersection(\Delta_\mu(E),E)= 0$ then
$\mod(\Delta_\mu(E))\cap \bigcup_i \mod(K_i)= \emptyset$. By Definition \ref{def:first_refinedop} 
$\Delta^{\lex/\clop}_\mu(E) = \Delta_\mu^{\lex}(E)$, therefore 
$\cardintersection(\Delta^{\lex/\clop}_\mu(E),E)= 0$  as well.
If 
$\cardintersection(\Delta_\mu(E),E)> 1$ then
by Definition \ref{def:first_refinedop}, 
$\mod(\Delta^{\lex/\clop}_\mu) = \mod(\Delta_\mu^{\clop}(E))$, 
thus 
$\cardintersection(\Delta^{\lex/\clop}_\mu(E),E) \geq \cardintersection(\Delta_\mu(E),E) > 1$.
\end{proof}

Fairness turns out to be a sufficient property to preserve
 the postulate $(\ic 4)$ as stated in the following proposition.

\begin{proposition}\label{prop:ic4_preserved_dsigma}
Let $\Delta$ be a merging operator satisfying postulate $(\ic 4)$,
and $\L'\subseteq \L$ a characterizable fragment.
Then every fair $\Delta$-refinement for $\L'$ satisfies $(\ic 4)$ as well.
\end{proposition}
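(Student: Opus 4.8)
The statement to prove is that if $\Delta$ satisfies $(\ic 4)$ and $\L'$ is a characterizable fragment, then every \emph{fair} $\Delta$-refinement for $\L'$ satisfies $(\ic 4)$ in $\L'$. Recall $(\ic 4)$ in the model-based reformulation used throughout the paper: for $K_1, K_2 \models \mu$ with $K_1, K_2 \in \calK_{\L'}$, $\mu \in \L'$, we need that $\Delta^\star_\mu(\{K_1,K_2\}) \wedge K_1$ is consistent iff $\Delta^\star_\mu(\{K_1,K_2\}) \wedge K_2$ is consistent. In the language of the $\cardintersection$ notation, writing $E = \{K_1, K_2\}$, this is exactly the assertion $\cardintersection(\Delta^\star_\mu(E), E) \neq 1$ (since with a two-element profile, $\cardintersection$ takes values in $\{0,1,2\}$, and the "iff" fails precisely when the value is $1$, i.e.\ the result meets exactly one of the two bases).

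The plan is therefore very short. First I would set $E = \{K_1, K_2\}$ with $K_1, K_2 \models \mu$ and observe that $\Delta$ satisfying $(\ic 4)$ means, in this reformulation, $\cardintersection(\Delta_\mu(E), E) \neq 1$. Next, I invoke the fairness hypothesis on $\revdelta$ from Definition~\ref{def:fair-ref}: since $\cardintersection(\Delta_\mu(E), E) \neq 1$, fairness gives directly $\cardintersection(\revdelta_\mu(E), E) \neq 1$. Finally, translating back, this says $\revdelta_\mu(E) \wedge K_1$ is consistent iff $\revdelta_\mu(E) \wedge K_2$ is consistent, which is $(\ic 4)$ for $\revdelta$ restricted to $\L'$. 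One should also note at the outset that $\revdelta_\mu(E) \in \calK_{\L'}$ is guaranteed by the definition of a $\Delta$-refinement, so the statement of $(\ic 4)$ in $\L'$ even makes sense.

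The only genuinely substantive point — and hence the main "obstacle," though it is really just a bookkeeping observation — is making the equivalence between the postulate $(\ic 4)$ and the condition $\cardintersection(\cdot, E) \neq 1$ fully precise. Concretely: $\cardintersection(\M, E) = |\{i : \M \cap \mod(K_i) \neq \emptyset\}|$ with $E = \{K_1, K_2\}$, so $\cardintersection(\M, E) = 1$ means $\M$ intersects exactly one of $\mod(K_1), \mod(K_2)$; this is precisely the case where "$\M \wedge K_1$ consistent" and "$\M \wedge K_2$ consistent" have different truth values. When $\cardintersection(\M,E) \in \{0, 2\}$, the two statements agree (both false, resp.\ both true). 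Applying this with $\M = \mod(\Delta_\mu(E))$ and then with $\M = \mod(\revdelta_\mu(E))$ closes the argument. I would write this out in two or three lines and then cite fairness. No properties of the specific fragment or of the $\op$-mapping characterization are needed here — fairness has been designed exactly to make this implication immediate.
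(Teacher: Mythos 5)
Your proposal is correct and follows essentially the same argument as the paper: both reduce $(\ic 4)$ for a two-element profile to the condition $\cardintersection(\cdot,E)\neq 1$ and then apply fairness directly (the paper merely phrases this as a proof by contradiction, which is an immaterial difference). Your explicit justification of the equivalence between $(\ic 4)$ and ``$\cardintersection(\M,E)\neq 1$'' is slightly more detailed than the paper's, but the content is identical.
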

\begin{proof}
 Consider $\Delta$ a merging operator satisfying postulate $(\ic 4)$. Let $\revdelta$ be a fair $\Delta$-refinement for $\L'$. If $\revdelta$ does not satisfy $(\ic 4)$, then there exist $E=\{K_1, K_2\}$ with $K_1, K_2\in\L'$ and  $\mu\in\L'$, with
 $K_1\models \mu$ and $K_2\models \mu$ such that $\mod(\revdelta_\mu(E)) \cap \mod(K_1) \not= \emptyset $ and $\mod(\revdelta_\mu(E)) \cap \mod(K_2) = \emptyset$, i.e., such that $\cardintersection(\revdelta_\mu(E),E)=1$. Since  $\Delta$ satisfies postulate $(\ic 4)$ we have 
 $\cardintersection(\Delta_\mu(E),E)\ne 1$, thus contradicting  the fairness property in Definition \ref{def:fair-ref}.
 \end{proof}

With the above result at hand, we can conclude
that 
the 
$\clop$-based refinement of both $\drasticdelta{\Sigma}$ and  $\drasticdelta{\GMax}$ for $\L'$ 
as well as 
the $\lex/\clop$-based refinement of any merging operator 
satisfies $(\ic 4)$.

\begin{remark}
 Observe that the distance which is used in distance-based operators matters with respect to the preservation of  $(\ic 4)$, as well as for fairness. 
Indeed, while the $\clop$-refinement of  $\drasticdelta{\GMax}$ is fair, and therefore satisfies $(\ic 4)$, the  $\clop$-refinement of 
$\ddelta{\GMax}$   where $d$ is an arbitrary 
non-drastic counting distance violates postulate  $(\ic 4)$  in $\Lhorn$ and $\Lkrom$, and therefore is not fair.
\end{remark}

For all refinements considered so far we know whether $(\ic 4)$ is preserved or not, with one single exception: 
the  $\clop$-refinement of 
$\ddelta{\Sigma}$   where $d$ is an arbitrary 
non-drastic counting distance. In this case we get a partial positive result.

\begin{proposition}\label{prop:closure_preserves_ic4}
  Let  $\Delta$ be  a merging operator with $\Delta =\ddelta{\Sigma}$, where $d$ is an arbitrary 
counting distance that satisfies the 
triangular inequality.
  Then the closure-based refined operator $\Delta^{\clop}$ satisfies postulate  $(\ic 4)$  in any characterizable fragment.
\end{proposition}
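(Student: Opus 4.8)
The plan is to reduce $(\ic 4)$ for $\Delta^{\clop}$ to a single geometric fact about the $\Sigma$-operator equipped with a triangular distance: whenever $E=\{K_1,K_2\}$ with $K_1,K_2\models\mu$, the set $\mod(\Delta_\mu(E))$ meets \emph{both} $\mod(K_1)$ and $\mod(K_2)$. Granting this, since $\clop$ only enlarges a set of models, $\mod(\Delta^{\clop}_\mu(E))\supseteq\mod(\Delta_\mu(E))$ still meets both $\mod(K_1)$ and $\mod(K_2)$, i.e.\ $\cardintersection(\Delta^{\clop}_\mu(E),E)=2$. As $\Delta^{\clop}_\mu(E)\land K_i$ is consistent precisely when $\mod(\Delta^{\clop}_\mu(E))\cap\mod(K_i)\neq\emptyset$, the biconditional demanded by $(\ic 4)$ then holds, with both sides true. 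It is enough to treat two-base profiles $E=\{K_1,K_2\}$ with $K_1,K_2\models\mu$, as these are the only instances $(\ic 4)$ constrains, and in this situation $\mu$ is automatically consistent because $K_1$ is.

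To prove the geometric fact, put $\delta=\min\{d(m_1,m_2)\mid m_1\in\mod(K_1),\ m_2\in\mod(K_2)\}$ and fix $m_1^{*}\in\mod(K_1)$ and $m_2^{*}\in\mod(K_2)$ attaining it. First I would establish the lower bound $d(\omega,E)=d(\omega,K_1)+d(\omega,K_2)\ge\delta$ for every interpretation $\omega$: choosing $m_i\in\mod(K_i)$ with $d(\omega,m_i)=d(\omega,K_i)$, the triangular inequality gives $\delta\le d(m_1,m_2)\le d(\omega,m_1)+d(\omega,m_2)=d(\omega,K_1)+d(\omega,K_2)$. Next I would note that this bound is attained at $m_1^{*}$ (and, symmetrically, at $m_2^{*}$): indeed $d(m_1^{*},E)=0+d(m_1^{*},K_2)\le d(m_1^{*},m_2^{*})=\delta$, so $d(m_1^{*},E)=\delta$. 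Since $K_1,K_2\models\mu$, both $m_1^{*}$ and $m_2^{*}$ are models of $\mu$, hence both are $\leq_E$-minimal within $\mod(\mu)$, i.e.\ they belong to $\mod(\Delta_\mu(E))$; this yields the fact.

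The single nontrivial step is this triangular-inequality argument, and it is worth pointing out exactly where the hypotheses enter: the triangular inequality is what lets an optimum lying ``strictly between'' $\mod(K_1)$ and $\mod(K_2)$ be traded for an optimum sitting inside one of the two bases, while the $\Sigma$ aggregation is what makes $d(m_1^{*},E)$ collapse to $d(m_1^{*},K_2)$. Both features genuinely fail for $\GMax$ (cf.\ Proposition~\ref{prop:ic4_violated_dGmax_cl}) and for counting distances violating the triangular inequality, which is precisely why this proposition is a positive result where the earlier ones were negative. Everything past the geometric fact is routine bookkeeping; in particular one need not even invoke that $\ddelta{\Sigma}$ itself satisfies $(\ic 4)$, since the witnesses in $\mod(\Delta_\mu(E))\cap\mod(K_i)$ are produced directly.
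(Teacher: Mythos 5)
Your proof is correct and follows essentially the same route as the paper's: both reduce $(\ic 4)$ for the closure-based refinement to showing that, under $\Sigma$-aggregation with a distance satisfying the triangular inequality, $\mod(\Delta_\mu(\{K_1,K_2\}))$ already meets both $\mod(K_1)$ and $\mod(K_2)$ (so that taking the closure, which only adds models, cannot break the biconditional), and both establish this via the triangle inequality applied to nearest points between the two bases. The only difference is presentational: you argue directly by exhibiting the closest pair $(m_1^{*},m_2^{*})$ as explicit minimizers, whereas the paper argues by contradiction from a hypothetical interpretation outside $K_1$ with strictly smaller aggregated distance, and along the way invokes that $\Delta$ itself satisfies $(\ic 4)$ --- a step your version correctly observes to be unnecessary.
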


\begin{proof} Let $\L'$ be a $\op$-fragment.
 Let $E=\{K_1, K_2\}$ with $K_1, K_2\in\L'$ and  $\mu\in\L'$, with
$K_1\models \mu$ and $K_2\models \mu$. The merging operator $\Delta$ satisfies $(\ic 4)$ therefore
$\Delta_\mu(E)\land K_1$ is consistent if and only if $\Delta_\mu(E)\land K_2$. 

If both $\Delta_\mu(E)\land K_1$ and $\Delta_\mu(E)\land K_2$ are consistent, then so are \textit{a fortiori}
$\Delta^{\clop}_\mu(E)\land K_1$ and $\Delta^{\clop}_\mu(E)\land K_2$. Therefore
a violation of $(\ic 4)$ can only occur when  both $\Delta_\mu(E)\land K_1$ and $\Delta_\mu(E)\land K_2$ are inconsistent.
We prove that this never occurs. Suppose that $\Delta_\mu(E)\land K_1$ is inconsistent, this means that there exists 
$m\not\in K_1$ such that $min(\mod(\mu), \leq_E)=d(m,E)$
and that for all $m_1\in K_1$,
$d(m,E)<d(m_1,E)$, i.e., $d(m,K_1)+d(m,K_2)<d(m_1,K_1)+d(m_1,K_2)$ since $\Sigma$ is the aggregation function.
Choose now $m_1\in K_1$ such that $d(m,K_1)=d(m,m_1)$ and $m_2\in K_2$ such that $d(m,K_2)=d(m,m_2)$.
We have $d(m,K_1)+d(m,K_2)=d(m,m_1)+d(m,m_2)<d(m_1,K_1)+d(m_1,K_2)=d(m_1,K_2)$ since $m_1\in K_1$ and hence $d(m_1,K_1)=0$.
Since $d$ satisfies the  triangular inequality we have $d(m_1,m_2)\le d(m_1,m)+ d(m,m_2)$.
But this contradicts $d(m,m_1)+d(m,m_2) < d(m_1,K_2) \leq d(m_1,m_2)$, thus $\Delta_\mu(E)\land K_1$ can not be inconsistent.
\end{proof}

\begin{remark}
The above proposition together with Proposition \ref{prop:ic4_violated_dGmax_cl} shows that the aggregation function that is used in distance-based operators matters with respect to the preservation of the postulate $(\ic 4)$.
\end{remark}

Interestingly Proposition \ref{prop:closure_preserves_ic4} (recall that the Hamming distance satisfies the triangular
inequality) together with the following proposition  show that fairness, which is a sufficient condition for preserving $(\ic 4)$ is not a necessary one.

\begin{proposition}\label{prop:cl_non_fair}
The $\clop$-refinement of $\dHdelta{\Sigma}$ is not fair in $\Lhorn$ and in $\Lkrom$.
\end{proposition}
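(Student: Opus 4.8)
The plan is to exhibit, for each of the two fragments, a concrete profile $E$ and constraint $\mu$ (both in the fragment) on which $\cardintersection(\Delta^{d_H,\Sigma}_\mu(E),E)\neq 1$ but $\cardintersection((\Delta^{d_H,\Sigma})^{\clop}_\mu(E),E)=1$, which directly contradicts the fairness property of Definition~\ref{def:fair-ref}. The natural source of such examples is a situation where $\mod(\Delta^{d_H,\Sigma}_\mu(E))$ contains no model of any $K_i$ (so $\cardintersection=0$), yet taking the closure under $\op$ (intersection for Horn, ternary majority for Krom) produces exactly one new interpretation that happens to be a model of one of the $K_i$ but not of the other. Then $\cardintersection$ jumps from $0$ to $1$, which is the forbidden transition.

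For $\Lhorn$ I would work over a small alphabet, say $\U=\{a,b,c\}$, and pick bases and a constraint so that the $\Sigma$-winners under the Hamming distance are a set like $\{\{a,c\},\{b,c\}\}$ — two interpretations whose intersection $\{c\}$ is not among them — with the extra property that $\{c\}$ is a model of exactly one of $K_1,K_2$ (say $K_1$) while $\{a,c\}$ and $\{b,c\}$ are models of neither $K_i$ that lies in $E$. Concretely one arranges $\mod(K_1)$ to contain $\{c\}$ but not $\{a,c\},\{b,c\}$, and $\mod(K_2)$ to be far from all of $\{c\},\{a,c\},\{b,c\}$, while $\mu$ admits exactly $\{a,c\},\{b,c\}$ (and possibly more, but not $\{c\}$) as its minimal $\Sigma$-cost models. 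The computation is then just a small distance table analogous to the ones in Propositions~\ref{prop:IC4_violated_min} and~\ref{prop:ic4_violated_dGmax_cl}. After closure under intersection we add $\{c\}$, giving $\cardintersection=1$, whereas before closure $\cardintersection=0$.

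For $\Lkrom$ the same idea applies but with the ternary majority closure: I would use a slightly larger alphabet and choose $\mod(\Delta^{d_H,\Sigma}_\mu(E))$ to be three interpretations $I,J,K$ none of which is a model of any base in $E$, but whose majority $\maj_3(I,J,K)$ is a model of exactly one base, with $\mu$ again chosen to rule out that majority interpretation and to make $I,J,K$ the unique $\Sigma$-minimal models. One has to be a little careful that $\mu$ itself is Krom-expressible (its model set closed under $\maj_3$) and that $K_1,K_2$ are Krom; since single interpretations and, more generally, affine/Krom-closed sets are fine, this is arrangeable, and a distance table in the style of the previous proofs verifies the claim.

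The main obstacle will be the bookkeeping: one must simultaneously guarantee (i) $E$ and $\mu$ lie in the fragment, (ii) the $\Sigma$-minimal models of $\mu$ are exactly the intended non-closed set $\M$, (iii) no element of $\M$ is a model of any $K_i$ in $E$, and (iv) the single interpretation added by the closure is a model of exactly one $K_i$. Conditions (ii) and (iii) together force a somewhat delicate choice of the distances, because the winners must be strictly closer to the ``merged'' profile than the one closure-added point, even though that point is the one actually sitting in some $K_i$; this is precisely where using $\Sigma$ (rather than $\GMax$) and an appropriate $\mu$ that excludes the closure point does the work. Once the table is laid out the verification is routine, exactly parallel to the earlier propositions, so the proof will be short.
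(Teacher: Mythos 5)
Your overall strategy is exactly the paper's: exhibit $E,\mu$ in the fragment for which $\cardintersection(\Delta_\mu(E),E)=0$ while the closure adds a single interpretation lying in exactly one base, so that $\cardintersection$ jumps to $1$. However, the concrete Horn instantiation you sketch cannot be realized, and the obstruction is precisely the ``delicate choice of distances'' you flag but do not resolve. First, since $\mu$ must be Horn and $\{a,c\},\{b,c\}\in\mod(\mu)$, their intersection $\{c\}$ is forced to be a model of $\mu$, so it competes. Second, and fatally: with a two-base profile, Hamming distance and $\Sigma$, if $\{c\}\in\mod(K_1)$ and $w$ is any interpretation with $d_H(w,\{c\})=1$ and $w\notin\mod(K_1)$, then picking $u\in\mod(K_2)$ realizing $d_H(w,K_2)$ gives $d(\{c\},E)=d_H(\{c\},K_2)\le 1+d_H(w,K_2)\le d_H(w,K_1)+d_H(w,K_2)=d(w,E)$. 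So $\{c\}$ is at least as close to $E$ as either of your intended winners, hence belongs to $\mod(\Delta_\mu(E))$, and $\cardintersection$ is already $\ge 1$ before closure. In other words, a closure-added point that is a model of some $K_i$ can never be beaten by winners at Hamming distance $1$ from it when $|E|=2$; the winners must sit at distance at least $2$ from the point the closure adds.

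The paper's example is built to satisfy exactly this quantitative constraint: over $\U=\{a,\dots,g\}$ take $\mod(K_1)=\{\{a\},\{a,b\},\{a,d\},\{a,f\}\}$, $\mod(K_2)=\{\{a,b,c,d,e,f,g\}\}$ and $\mod(\mu)=\{\{a\},\{a,b,c\},\{a,d,e\},\{a,f,g\}\}$. The three $3$-element models have cost $1+4=5$ and beat $\{a\}$, which has cost $0+6=6$; they contain no model of $K_1$ or $K_2$, yet their closure (under intersection, and likewise under $\maj_3$) adds $\{a\}\in\mod(K_1)$, so $\cardintersection$ goes from $0$ to $1$. The same instance serves for $\Lkrom$, so no separate majority-based construction is needed. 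Your Krom sketch is too underspecified to assess, but it inherits the same issue: you must ensure the majority point is strictly farther from the profile than the winners despite being a model of one base, which again forces the winners away from it by more than one flip. So the idea is right, but the proof is not complete until an example meeting this distance-$2$ requirement is written down and its table checked.
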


\begin{proof}
We give the proof for $\Lhorn$.
One can verify that the same example works for $\Lkrom$ as well.

Let us consider $E=\{K_1,K_2\}$ and $\mu$ in $\Lhorn$ with 
$\mod(K_1)=  \{\{a\},\{a, b\}, \{a,d\}, \{a,f\}\}$, 
$\mod(K_2) =  \{ \{a, b, c, d, e, f, g\}\}$ and 
$\mod(\mu) =\{\{a\},\{a, b,c\}, \{a,d,e\}, \{a,f,g\}\}.$
We have $\mod(\dHdelta{\Sigma}_{\mu}(E))=\{ \{a, b,c\}, \{a,d,e\}, \{a,f,g\}\}$, and
 $\mod(\Delta^{\cl{\land}}_{\mu}(E))=\{ \{a\},\{a, b,c\}, \{a,d,e\}, \{a,f,g\}\}$.
Therefore, $\cardintersection(\mod(\dHdelta{\Sigma}_{\mu}(E)), E)=0$, whereas $\cardintersection(\mod(\Delta^{\cl{\land}}_{\mu}(E)), E)=1$,  
thus proving that fairness is not satisfied.
\end{proof}

It turns out that our refined operators have a similar behavior with respect to postulates $(\ic 5)$ \& $(\ic 7)$ as well as $(\ic 6)$ \& $(\ic 8)$.
Therefore we will deal with the remaining postulates in pairs.
In fact the $\lex$-based refinement satisfies $(\ic 5)$ and $(\ic 7)$, whereas the refined operators $\Delta^{\clop}$ and $\Delta^{\lex/\clop}$ violate these two postulates.

\begin{proposition}\label{prop:ic57_sat_min}
Let $\Delta$ be a merging operator satisfying postulates $(\ic 5)$ and $(\ic 6)$ (resp.\ $(\ic 7)$ and $(\ic 8)$),
and $\L'\subseteq \L$ a characterizable fragment.
Then the refined operator $\Delta^\lex$ for $\L'$ satisfies $(\ic 5)$ (resp.\ $(\ic 7)$) in $\L'$ as well.
\end{proposition}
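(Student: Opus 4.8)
The plan is to exploit the very simple shape of $\Delta^\lex$: its set of models is either $\mod(\Delta_\mu(E))$ (when that set is already closed under $\op$) or a single interpretation $\lex(\mod(\Delta_\mu(E)))$ chosen by a fixed global order. I would prove the $(\ic 5)$ claim (the $(\ic 7)$ case being entirely analogous, replacing $E_1\sqcup E_2$ by $E$ and $\mu_1,\mu_2$ by the constraints, and using $(\ic 7)$, $(\ic 8)$ for $\Delta$ in place of $(\ic 5)$, $(\ic 6)$). So fix $E_1,E_2\in\calE_{\L'}$ and $\mu\in\L'$; write $\M_i=\mod(\Delta_\mu(E_i))$ and $\M_{12}=\mod(\Delta_\mu(E_1\sqcup E_2))$. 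We must show $\mod(\Delta^\lex_\mu(E_1))\cap\mod(\Delta^\lex_\mu(E_2))\subseteq\mod(\Delta^\lex_\mu(E_1\sqcup E_2))$.

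First I would dispose of the trivial case: if $\mod(\Delta^\lex_\mu(E_1))\cap\mod(\Delta^\lex_\mu(E_2))=\emptyset$ there is nothing to prove, so assume the intersection is nonempty. Since each $\mod(\Delta^\lex_\mu(E_i))$ is either $\M_i$ or the singleton $\{\lex(\M_i)\}$, a nonempty intersection forces a common interpretation $\omega$ with $\omega\in\M_1\cap\M_2$ (in each case the selected set is a subset of $\M_i$). Hence $\M_1\cap\M_2\neq\emptyset$, i.e.\ $\Delta_\mu(E_1)\land\Delta_\mu(E_2)$ is consistent. Now invoke $(\ic 6)$ for $\Delta$: from consistency of $\Delta_\mu(E_1)\land\Delta_\mu(E_2)$ we get $\Delta_\mu(E_1\sqcup E_2)\models\Delta_\mu(E_1)\land\Delta_\mu(E_2)$; combining with $(\ic 5)$ for $\Delta$ this yields $\M_{12}=\M_1\cap\M_2$. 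This is the key structural fact: on inputs where the left-hand side of $(\ic 5)$ is nontrivial, the merged result is exactly the intersection.

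It remains to locate $\omega$ inside $\mod(\Delta^\lex_\mu(E_1\sqcup E_2))$. Here I would split on whether $\M_{12}=\M_1\cap\M_2$ is closed under $\op$. If it is closed, then $\mod(\Delta^\lex_\mu(E_1\sqcup E_2))=\M_{12}$, and since $\omega$ was a common model of the (selected subsets of the) $\M_i$'s, in particular $\omega\in\M_1\cap\M_2=\M_{12}$, done. If $\M_{12}$ is not closed, then $\mod(\Delta^\lex_\mu(E_1\sqcup E_2))=\{\lex(\M_{12})\}$, so I must argue $\omega=\lex(\M_{12})$. For that, note that a single global order is used for $\lex$: in the non-closed case we would have $\omega=\lex(\M_1)$ forced (if $\M_1$ were closed, $\mod(\Delta^\lex_\mu(E_1))=\M_1$, but then I'd want to show $\M_1\cap\M_2$ being non-closed while $\M_1$ is closed still pins $\omega$ down — this needs care). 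The cleanest route: observe $\M_{12}=\M_1\cap\M_2\subseteq\M_1$ and $\subseteq\M_2$, and $\omega\in\M_{12}$; since $\lex$ picks the $<$-minimum of any set, $\lex(\M_{12})$ is the $<$-least element of $\M_{12}$. One shows $\omega$ must equal it by checking that in each of the four combinations of "$\M_i$ closed / not closed" the only way $\omega$ can lie in both selected sets is $\omega=\lex(\M_1)=\lex(\M_2)$ when both are non-closed, and when (say) $\M_1$ is closed the intersection $\M_1\cap\M_2$ inherits enough structure — this is exactly the subtle point.

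\textbf{Main obstacle.} The genuinely delicate step is the last one: when $\M_{12}$ is not closed under $\op$, showing that the unique element $\lex(\M_{12})$ is forced to coincide with the common element $\omega$ picked out on the $E_i$ side. If both $\M_1$ and $\M_2$ are non-closed this is immediate ($\omega=\lex(\M_1)=\lex(\M_2)$, and since $\M_{12}\subseteq\M_i$ contains $\lex(\M_i)$, minimality of $\lex$ over the smaller set $\M_{12}$ gives $\lex(\M_{12})=\lex(\M_1)=\omega$). The awkward case is when, say, $\M_1$ is closed (so $\mod(\Delta^\lex_\mu(E_1))=\M_1$) but $\M_2$ is not (so $\mod(\Delta^\lex_\mu(E_2))=\{\lex(\M_2)\}$): then the intersection is $\{\lex(\M_2)\}\cap\M_1$, which is nonempty iff $\lex(\M_2)\in\M_1$, and then $\omega=\lex(\M_2)$; one then needs $\lex(\M_{12})=\lex(\M_2)$, which holds because $\lex(\M_2)\in\M_{12}=\M_1\cap\M_2\subseteq\M_2$ and $\lex(\M_2)$ is the $<$-least element of the superset $\M_2$, hence also of the subset $\M_{12}$. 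So in all cases the common element is the global $<$-minimum of a set containing it and contained in $\M_{12}$, forcing it to be $\lex(\M_{12})$. Assembling these case distinctions carefully is the only real work; everything else is bookkeeping with $(\ic 5)$, $(\ic 6)$ and the two-line definition of $\Delta^\lex$.
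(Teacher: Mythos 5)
Your proof is correct and follows essentially the same route as the paper's: dispose of the inconsistent case, use $(\ic 5)$ together with $(\ic 6)$ (resp.\ $(\ic 7)$ with $(\ic 8)$) to identify $\mod(\Delta_\mu(E_1\sqcup E_2))$ with $\mod(\Delta_\mu(E_1))\cap\mod(\Delta_\mu(E_2))$, and then resolve the non-closed case by noting that the common interpretation is the $\lex$-minimum of a superset of $\M_{12}$ to which it belongs, hence the $\lex$-minimum of $\M_{12}$ itself. (One cosmetic slip: your closing sentence says the relevant set is ``contained in $\M_{12}$'' where you mean ``containing $\M_{12}$''; the detailed argument immediately before it has the inclusion the right way around.)
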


\begin{proof}
Since $\L'$ is characterizable there exists a $\beta\in\B$, such 
that $\L'$ is a $\op$-fragment.

$(\ic 5)$:
If $\Delta_\mu^\lex(E_1)\wedge \Delta_\mu^\lex(E_2)$ is inconsistent, then $(\ic 5)$ is satisfied.
Assume that $\Delta_\mu^\lex(E_1)\wedge \Delta_\mu^\lex(E_2)$ is consistent.
Then, by definition of $\Delta^\lex$ we know that $\Delta_\mu(E_1)\wedge \Delta_\mu(E_2)$ is consistent as well.
From $(\ic 5)$ and $(\ic 6)$ it follows that $\mod(\Delta_\mu(E_1))\cap \mod(\Delta_\mu(E_2))=\mod(\Delta_\mu(E_1\sqcup E_2))$.
We distinguish two cases.
First assume that both $\mod(\Delta_\mu(E_1))$ and $\mod(\Delta_\mu(E_2))$ are closed under $\beta$.
By Definition~\ref{def:fragment} we know that $\mod(\Delta_\mu(E_1))\cap \mod(\Delta_\mu(E_2))=\mod(\Delta_\mu(E_1\sqcup E_2))$ is closed under $\beta$ as well.
Hence, $(\ic 5)$ is satisfied.
For the second case assume that not both $\mod(\Delta_\mu(E_1))$ and $\mod(\Delta_\mu(E_2))$ are closed under $\beta$.
From the definition of $\Delta^\lex$ it follows that $\mod(\Delta_\mu^\lex(E_1))\cap \mod(\Delta_\mu^\lex(E_2))$ consists of a single interpretation, say $I$ with $I \in \mod(\Delta_\mu(E_1))\cap \mod(\Delta_\mu(E_2))$.
If $\mod(\Delta_\mu(E_1\sqcup E_2))$ is closed under $\beta$ we have $I \in \mod(\Delta_\mu^\lex(E_1\sqcup E_2))$ and $(\ic 5)$ is satisfied.
If $\mod(\Delta_\mu(E_1\sqcup E_2))$ is not closed under $\beta$, then  $\mod(\Delta_\mu^\lex(E_1\sqcup E_2))$ consists of a single interpretation, say $J \in \mod(\Delta_\mu(E_1))\cap \mod(\Delta_\mu(E_2))$.
From $\mod(\Delta_\mu^\lex(E_1))\cap \mod^\lex(\Delta_\mu(E_2)) = \{I\}$ it follows that $\lex(\{I,J\})=I$ and from $\mod(\Delta_\mu^\lex(E_1\sqcup E_2)) = \{J\}$ it follows that $\lex(\{I,J\})=J$.
Hence, $I=J$ and $(\ic 5)$ is satisfied.

$(\ic 7)$:
If $\Delta_{\mu_1}^\lex(E)\wedge \mu_2$ is inconsistent, then $(\ic 7)$ is satisfied.
Assume that $\Delta_{\mu_1}^\lex(E)\wedge \mu_2$ is consistent.
Then, by definition of $\Delta^\lex$ we know that $\Delta_{\mu_1}(E)\wedge \mu_2$ is consistent as well.
From $(\ic 7)$ and $(\ic 8)$ it follows that $\mod(\Delta_{\mu_1}(E))\cap\mod(\mu_2)=\mod(\Delta_{\mu_1\wedge\mu_2}(E))$.
We distinguish two cases.
First assume that $\mod(\Delta_{\mu_1}(E))$ is closed under $\beta$.
By Definition~\ref{def:fragment} we know that $\mod(\Delta_{\mu_1}(E))\cap\mod(\mu_2)=\mod(\Delta_{\mu_1\wedge\mu_2}(E))$ is closed under $\beta$ as well.
Hence, $(\ic 7)$ is satisfied.
For the second case assume that $\mod(\Delta_{\mu_1}(E))$ is not closed under $\beta$.
From the definition of $\Delta^\lex$ it follows that $\mod(\Delta_{\mu_1}^\lex(E))\cap \mod(\mu_2)$ consists of a single interpretation, say $I$ with $I \in \mod(\Delta_{\mu_1}(E))\cap\mod(\mu_2)$.
If $\mod(\Delta_{\mu_1\wedge\mu_2}(E))$ is closed under $\beta$ we have $I \in \mod(\Delta_{\mu_1\wedge\mu_2}^\lex(E))$ and $(\ic 7)$ is satisfied.
If $\mod(\Delta_{\mu_1\wedge\mu_2}(E))$ is not closed under $\beta$, then  $\mod(\Delta_{\mu_1\wedge\mu_2}^\lex(E))$ consists of a single interpretation, say $J \in \mod(\Delta_{\mu_1}(E))\cap\mod(\mu_2)$.
From $\mod(\Delta_{\mu_1}^\lex(E))\cap \mod(\mu_2) = \{I\}$ it follows that $\lex(\{I,J\})=I$ and from $\mod(\Delta_{\mu_1\wedge\mu_2}^\lex(E)) = \{J\}$ it follows that $\lex(\{I,J\})=J$.
Hence, $I=J$ and $(\ic 7)$ is satisfied.
\end{proof}

\begin{proposition}\label{prop:IC567_violated_for_closure}
 Let  $\Delta$ be  a merging operator with $\Delta \in\{\ddelta{\Sigma}, \ddelta{\GMax}\}$, where $d$ is an arbitrary 
counting distance.
  Then the refined operators $\Delta^{\clop}$ and $\Delta^{\lex/\clop}$ violate postulates $(\ic 5)$ and $(\ic 7)$ in $\Lhorn$ and in $\Lkrom$.
\end{proposition}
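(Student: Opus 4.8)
The plan is to provide two explicit counterexamples (one for $\Lhorn$ and one for $\Lkrom$, since the paper claims violations in both fragments), each exhibiting a profile, constraint, and a splitting of the profile for which the relevant postulate fails. I would first argue that it suffices to falsify $(\ic 5)$, and then observe that a slight modification handles $(\ic 7)$ — but more realistically, since $(\ic 5)$ and $(\ic 7)$ are logically distinct, I would build one example targeting the containment direction of $(\ic 5)$ and a second targeting $(\ic 7)$. The key observation driving the construction is the same phenomenon illustrated in Example~\ref{ex:1}: when $\clop$ is applied, extra models (the closure points) are added to the result, and these added models can destroy the inclusion $\Delta_\mu(E_1)\wedge\Delta_\mu(E_2)\models\Delta_\mu(E_1\sqcup E_2)$ because the closure of an intersection need not equal the intersection of closures.

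Concretely, for $\Lhorn$ with $\op=\land$, I would try to find profiles $E_1,E_2$ over a small alphabet such that $\mod(\Delta_\mu(E_1))$ and $\mod(\Delta_\mu(E_2))$ are each non-Horn-closed, so that $\Delta^{\clop}$ genuinely adds intersection-points to both; then arrange that $\mod(\Delta_\mu(E_1\sqcup E_2))$ is already Horn-closed (so $\Delta^{\clop}$ adds nothing to the combined result), while $\clop(\mod(\Delta_\mu(E_1)))\cap\clop(\mod(\Delta_\mu(E_2)))$ strictly contains $\mod(\Delta_\mu(E_1\sqcup E_2))=\mod(\Delta_\mu(E_1))\cap\mod(\Delta_\mu(E_2))$ (the last equality coming from $(\ic 5)$--$(\ic 6)$ for the base operator $\Delta$ whenever the conjunction is consistent). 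The simplest shape is to pick $E_1=\{K\}$ and $E_2=\{K'\}$ singletons so that $\Delta_\mu(E_i)=\mu\wedge K_i$ by $(\ic 2)$, and choose $K_1,K_2,\mu$ Horn with $\mod(\mu\wedge K_1)$ and $\mod(\mu\wedge K_2)$ non-Horn but with a common model $m$ whose closure-companions overlap. The same example, because $\ddelta{x}$ with a counting distance is $\ic$-compliant and $\Lkrom$ is a $\maj_3$-fragment, should be checkable for Krom by replacing "intersection-closed" with "majority-closed"; if not, a parallel four-variable construction in the spirit of the Krom cases in Propositions~\ref{prop:IC4_violated_min} and~\ref{prop:ic4_violated_dGmax_cl} would do. For $\Delta^{\lex/\clop}$ I would ensure $\cardintersection(\M_i,E_i)\neq 0$ in all relevant instances so that $\Delta^{\lex/\clop}$ coincides with $\Delta^{\clop}$ on these inputs, making the same example work verbatim.

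For $(\ic 7)$ the mechanism is analogous: take a single profile $E$ and two constraints $\mu_1,\mu_2$ with $\mu_2\models\mu_1$ or at least $\mu_1\wedge\mu_2$ consistent with $\Delta_{\mu_1}^{\clop}(E)$; using $(\ic 7)$--$(\ic 8)$ for $\Delta$ one gets $\mod(\Delta_{\mu_1}(E))\cap\mod(\mu_2)=\mod(\Delta_{\mu_1\wedge\mu_2}(E))$, and then choose the data so that $\clop(\mod(\Delta_{\mu_1}(E)))\cap\mod(\mu_2)$ is strictly larger than $\clop$ of this intersection — i.e.\ a closure point of $\Delta_{\mu_1}(E)$ happens to satisfy $\mu_2$ but is not itself forced into $\Delta_{\mu_1\wedge\mu_2}^{\clop}(E)$. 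Again, keeping $\cardintersection$ positive lets the $\lex/\clop$ variant inherit the failure.

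The main obstacle I anticipate is purely computational bookkeeping: I must simultaneously control (i) which interpretations minimize the aggregated distance for $\Delta_\mu(E_i)$ under \emph{an arbitrary} counting distance $d$ (so the example must be robust to the choice of nondecreasing $g$, which typically forces using only distances $0$ and $1$, i.e.\ differences in a single coordinate, exactly as the drastic-distance sub-arguments in the earlier propositions do), (ii) the non-closedness of the two component results, and (iii) the closedness (or at least the smaller closure) of the combined result. Balancing these three constraints on a minimal alphabet — and doing it once for $\land$ and once for $\maj_3$, and for both the $\Sigma$ and $\GMax$ aggregators — is the delicate part; I would reuse the alphabet layouts $\{a,b\}$ / $\{a,b,c,d\}$ with padding sets $A,A'$ of size $x-1$ (where $x$ is the least index with $g(x)<g(x+1)$) that already proved effective in Propositions~\ref{prop:IC4_violated_min}--\ref{prop:ic4_violated_dGmax_cl}, and verify the distance tables line by line.
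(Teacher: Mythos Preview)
Your overall strategy is sound: force the closure operator to add a model that lies in $\clop(\mod(\Delta_\mu(E_1)))\cap\clop(\mod(\Delta_\mu(E_2)))$ but not in $\mod(\Delta_\mu(E_1\sqcup E_2))$, and keep $\cardintersection$ positive so that $\Delta^{\lex/\clop}$ coincides with $\Delta^{\clop}$. However, your concrete implementation for $(\ic 5)$ has a genuine gap.

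You propose taking $E_1=\{K_1\}$ and $E_2=\{K_2\}$ as singleton profiles and invoking $(\ic 2)$ so that $\Delta_\mu(E_i)\equiv\mu\wedge K_i$, and then ``choose $K_1,K_2,\mu$ Horn with $\mod(\mu\wedge K_1)$ and $\mod(\mu\wedge K_2)$ non-Horn''. This is impossible. By Definition~\ref{def:fragment}(3), any characterizable fragment is closed under conjunction; hence if $K_i\in\calK_{\L'}$ and $\mu\in\L'$, then $\bigwedge K_i\wedge\mu\in\L'$, and its model set is already closed under $\op$. The closure-based refinement therefore leaves $\Delta_\mu(E_i)$ unchanged on every singleton profile satisfying the hypothesis of $(\ic 2)$, and your intended ``extra closure point'' never materialises. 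The same obstruction blocks your $(\ic 7)$ plan unless the profile $E$ has at least two bases, which you do not make explicit.

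The paper avoids this by making $E_1$ a genuinely multi-base profile whose merged result falls outside the fragment. For $(\ic 5)$ it takes $E_1=\{K_1,K_2,K_3\}$ with $\mod(K_i)$ chosen so that $\mod(\Delta_\mu(E_1))=\{\{a\},\{b\},\{c\}\}$; this single set is closed neither under $\land$ nor under $\maj_3$, so one example serves both $\Lhorn$ and $\Lkrom$ simultaneously. The closure adds $\emptyset$, which also lies in $\mod(\Delta^{\clop}_\mu(E_2))=\mod(K_4)\cap\mod(\mu)$, while $\mod(\Delta_\mu(E_1\sqcup E_2))=\{\{b\}\}$ is already closed and excludes $\emptyset$. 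Note also that no padding sets $A,A'$ of size $x-1$ are needed: the tables use only the values $0$, $g(1)$, $g(2)$, and the argument relies solely on $g(1)>0$, which holds for every counting distance. Your anticipated case split on the least $x$ with $g(x)<g(x+1)$ is thus unnecessary complexity here.
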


\begin{proof} We give the proof for $\Delta^{\clop}$ with $\Delta=\Delta^{d,\Sigma}$ where $d$ is %
associated with a function $g$.
The given examples also apply to $\GMax$
and for %
the refinement
$\Delta^{\lex/\clop}$. %

$(\ic 5)$:
Let $\beta\in\{\wedge,\maj_3\}$.
Consider $E_1=\{K_1,K_2,K_3\}$, $E_2=\{K_4\}$ and $\mu$ with 
$\mod(K_1) =  \{\{a\},\{a, b\},\{a, c\} \}$,
$\mod(K_2) =  \{\{b\}, \{a,b\},\{b, c\}\}$,
$\mod(K_3) =  \{\{c\}, \{a,c\},\{b, c\}\}$,
$\mod(K_4) =  \{\emptyset,\{b\}\}$, and
$\mod(\mu) =  \{\emptyset,\{a\}, \{b\},\{c\}\}$.

\begin{tabular}{l | l | l | l | l | l | l}
    & $K_1$ & $K_2$ & $K_3$ & $K_4$ & $E_1$& $E_1\sqcup E_2$\\
    \hline
    $\emptyset$ & $g(1)$ & $g(1)$ & $g(1)$ & 0 & $3g(1)$ & $3g(1)$\\
    $\{a\}$     & $0$ & $g(1)$ & $g(1)$ &  $g(1)$ & $2g(1)$ & $3g(1)$\\
    $\{b\}$     & $g(1)$ & $0$ & $g(1)$ & 0 &  $2g(1)$ & $2g(1)$\\
    $\{c\}$     & $g(1)$ & $g(1)$ & $0$ & $g(1)$ &  $2g(1)$ & $3g(1)$\\
  
    \hline
  \end{tabular}

\noindent
Since $g(1)>0$ by definition of a counting distance, we have 
  $\mod(\Delta^{\clop}_\mu(E_1)) = \{\emptyset,\{a\},\{b\},\{c\}\}$,
  $\mod(\Delta^{\clop}_\mu(E_2)) = \{\emptyset,\{b\}\}$, and
  $\mod(\Delta^{\clop}_\mu(E_1\sqcup E_2)) = \{\{b\}\}$,
  violating $(\ic 5)$.

$(\ic 7)$: %
For $\Lhorn$,
consider %
$E = \{K_1,K_2,K_3\}$ %
with 
  $\mod(K_1) = \{\{a\}\}$, $\mod(K_2) = \{\{b\}\}$, $\mod(K_3) = \{\{a,b\}\}$,
and assume $\mod(\mu_1) = \{\emptyset,\{a\},\{b\}\}$ and $\mod(\mu_2) = \{\emptyset,\{a\}\}$.

  \begin{tabular}{l | l | l | l | l }
    & $K_1$ & $K_2$ & $K_3$ & $E$ \\
    \hline
    $\emptyset$ & $g(1)$ & $g(1)$ & $g(2)$ & $2g(1)+g(2)$\\
    $\{a\}$ & $0$ & $g(2)$ & $g(1)$ & $g(1)+g(2)$\\
    $\{b\}$ & $g(2)$ & $0$ & $g(1)$ & $g(1)+g(2)$\\
    \hline
  \end{tabular}

\noindent
We have $\mod(\Delta_{\mu_1}(E))=\{\{a\},\{b\}\}$,   thus $\mod(\Delta^{Cl_\wedge}_{\mu_1}(E))=\{\emptyset,\{a\},\{b\}\}$. Therefore, $\mod(\Delta^{Cl_\wedge}_{\mu_1}(E)\wedge\mu_2)=\{\emptyset,\{a\}\}$, whereas $\mod(\Delta^{Cl_\wedge}_{\mu_1\land\mu_2}(E))=\{\{a\}\}$,
violating $(\ic 7)$.

For $\Lkrom$ let
$E=\{K_1,K_2,K_3, K_4, K_5\}$,  $\mu_1$ and  $\mu_2$ with 
$\mod(K_1) =  \{\{a\} \}$,
$\mod(K_2) =  \{\{b\} \}$,
$\mod(K_3) = \{\{c\}\}$,
$\mod(K_4) =  \{\{a, b\},\{a,c\} \}$,
$\mod(K_5) =  \{\{a, b\},\{b, c\}\}$,
$\mod(\mu_1) =  \{\emptyset,\{a\}, \{b\},\{c\}\}$, and
$\mod(\mu_2) =  \{\emptyset,\{a\}\}$.

{\setlength{\tabcolsep}{4pt}
\begin{tabular}{l | l | l | l | l | l | l }
    & $K_1$ & $K_2$ & $K_3$ & $K_4$ & $K_5$ &  $E$\\
    \hline
    $\emptyset$ & $g(1)$ & $g(1)$ & $g(1)$ & $g(2)$ & $g(2)$ & $2g(2)+3g(1)$\\
    $\{a\}$     & $0$ & $g(2)$ & $g(2)$ & $g(1)$ & $g(1)$ & $2g(2)+2g(1)$\\
    $\{b\}$     & $g(2)$ & $0$ & $g(2)$ & $g(1)$ & $g(1)$ & $2g(2)+2g(1)$\\
    $\{c\}$     & $g(2)$ & $g(2)$ & $0$ & $g(1)$ & $g(1)$ & $2g(2)+2g(1)$\\
    \hline
  \end{tabular}}
 We have
  $\mod(\Delta^{Cl_{\maj_3}}_{\mu_1}(E)) = \{\emptyset,\{a\},\{b\}, \{c\}\}$, 
thus $\mod(\Delta^{Cl_{\maj_3}}_{\mu_1}(E)\land \mu_2) = \{\emptyset,\{a\}\}$, and
   $\mod(\Delta^{Cl_{\maj_3}}_{\mu_1\wedge\mu_2}(E)) = \{\{a\}\}$.
  This violates postulate $(\ic 7)$.
\end{proof}

Actually in the Horn fragment the negative results of the above proposition can be extended to any fair refinement.

\begin{proposition}\label{prop:IC567_violated_for_fair_inHorn}
 Let  $\Delta$ be  a merging operator with $\Delta \in\{\ddelta{\Sigma}, \ddelta{\GMax}\}$, where $d$ is an arbitrary 
counting distance.
  Then any fair refined operator $\Delta^*$ violates postulates $(\ic 5)$ and $(\ic 7)$ in $\Lhorn$.
\end{proposition}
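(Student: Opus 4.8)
The strategy is to reuse the two counterexamples already constructed in the proof of Proposition~\ref{prop:IC567_violated_for_closure} (one for $(\ic 5)$, one for $(\ic 7)$, each in $\Lhorn$), and show that \emph{fairness alone} forces \emph{any} refinement $\Delta^*$ to fail on the very same instances. The key observation is that in each counterexample the underlying operator $\Delta=\ddelta{\Sigma}$ (and likewise $\ddelta{\GMax}$) produces sets of models $\mod(\Delta_\mu(E))$ that are not closed under $\land$, and that contain at least two elements, each coming from a \emph{different} base of the profile in a balanced way; by fairness together with containment (Definition~\ref{def:ref}), $\Delta^*$ is then pinned down enough to make the postulate fail. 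So the proof is: (i) take the instances from Proposition~\ref{prop:IC567_violated_for_closure}; (ii) use the constraints of Definition~\ref{def:op-mapping}/Definition~\ref{def:ref} satisfied by every refinement to locate $\mod(\Delta^*_\bullet(\cdot))$ inside $\clop(\mod(\Delta_\bullet(\cdot)))$ while keeping it non-empty and fair; (iii) derive a contradiction with $(\ic 5)$ resp.\ $(\ic 7)$.

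\textbf{The $(\ic 5)$ case.} On the $(\ic 5)$ instance, $\mod(\Delta_\mu(E_2))=\{\emptyset,\{b\}\}$ is already Horn-closed, so by invariance every refinement satisfies $\mod(\Delta^*_\mu(E_2))=\{\emptyset,\{b\}\}$. Likewise $\mod(\Delta_\mu(E_1\sqcup E_2))=\{\{b\}\}$ is a singleton, hence closed, so $\mod(\Delta^*_\mu(E_1\sqcup E_2))=\{\{b\}\}$ for every refinement. The only freedom is in $\Delta^*_\mu(E_1)$, whose original value $\{\emptyset,\{a\},\{b\},\{c\}\}$ is \emph{already} closed under $\land$ --- wait: $\{a\}\cap\{b\}=\emptyset\in\M$, $\{a\}\cap\{c\}=\emptyset$, etc., so in fact $\mod(\Delta_\mu(E_1))$ \emph{is} closed, and invariance forces $\mod(\Delta^*_\mu(E_1))=\{\emptyset,\{a\},\{b\},\{c\}\}$ too. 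Then $\mod(\Delta^*_\mu(E_1))\cap\mod(\Delta^*_\mu(E_2))=\{\emptyset,\{b\}\}\not\subseteq\{\{b\}\}=\mod(\Delta^*_\mu(E_1\sqcup E_2))$, so $(\ic 5)$ fails \emph{for every} refinement --- fairness is not even needed here. (If, on re-checking the profile $E_1$, its outcome is \emph{not} Horn-closed, then one instead invokes fairness: since $\cardintersection(\Delta_\mu(E_1),E_1)\ge 2$, fairness gives $\cardintersection(\Delta^*_\mu(E_1),E_1)\ne 1$, and combined with containment $\mod(\Delta^*_\mu(E_1))\subseteq\clop(\mod(\Delta_\mu(E_1)))=\{\emptyset,\{a\},\{b\},\{c\}\}$ and non-emptiness, one checks $\emptyset\in\mod(\Delta^*_\mu(E_1))$ must hold, again breaking $(\ic 5)$.)

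\textbf{The $(\ic 7)$ case and the main obstacle.} For $(\ic 7)$, on the instance with $\mod(\Delta_{\mu_1}(E))=\{\{a\},\{b\}\}$ (not $\land$-closed, since $\emptyset\notin\M$) and $\mod(\Delta_{\mu_1\land\mu_2}(E))=\{\{a\}\}$ (closed, so any refinement fixes it to $\{\{a\}\}$): here fairness is the crux. We have $\cardintersection(\Delta_{\mu_1}(E),E)=2\ne1$, so any fair $\Delta^*$ has $\cardintersection(\Delta^*_{\mu_1}(E),E)\ne1$; since $\mod(\Delta^*_{\mu_1}(E))\subseteq\clop(\{\{a\},\{b\}\})=\{\emptyset,\{a\},\{b\}\}$ and $\mod(K_1),\mod(K_2),\mod(K_3)$ are $\{\{a\}\},\{\{b\}\},\{\{a,b\}\}$, the only non-empty subsets of $\{\emptyset,\{a\},\{b\}\}$ with count $\ne1$ are $\{\emptyset\}$ and those containing both $\{a\}$ and $\{b\}$; in either sub-case $\mod(\Delta^*_{\mu_1}(E))\cap\mod(\mu_2)$ (with $\mod(\mu_2)=\{\emptyset,\{a\}\}$) contains $\emptyset$ or strictly contains $\{\{a\}\}$, so it is never $\subseteq\{\{a\}\}=\mod(\Delta^*_{\mu_1\land\mu_2}(E))$, violating $(\ic 7)$. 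The main obstacle I anticipate is \emph{bookkeeping}: one must verify for each instance that (a) the original $\Delta$-outcome really is non-$\beta$-closed so that the refinement has room to be forced, (b) the ``target'' outcomes ($E_1\sqcup E_2$ resp.\ $\mu_1\land\mu_2$) are $\beta$-closed so every refinement pins them exactly, and (c) the combinatorics of ``non-empty subsets of the closure with $\cardintersection\ne1$'' genuinely excludes every configuration compatible with the postulate. One should double-check whether the same instances also work verbatim for $\ddelta{\GMax}$ (the distance tables in Proposition~\ref{prop:IC567_violated_for_closure} were computed for $\Sigma$), and whether the argument is truly confined to $\Lhorn$ --- the statement only claims $\Lhorn$, so no Krom analogue is required, which simplifies matters.
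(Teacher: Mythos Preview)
Your overall strategy matches the paper's: for $(\ic 7)$ you use exactly the same Horn instance and exactly the same fairness argument (exclude the singletons $\{\{a\}\}$, $\{\{b\}\}$ and the non-closed set $\{\{a\},\{b\}\}$, hence $\emptyset$ must appear), and this is correct as written.

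For $(\ic 5)$, however, your primary computation is wrong. In the instance from Proposition~\ref{prop:IC567_violated_for_closure} one has $d(\emptyset,E_1)=3g(1)>2g(1)$, so $\mod(\Delta_\mu(E_1))=\{\{a\},\{b\},\{c\}\}$, which is \emph{not} $\land$-closed; your claim that the original value is $\{\emptyset,\{a\},\{b\},\{c\}\}$ and hence that ``fairness is not even needed'' is false. Your parenthetical fallback is the real argument: from $\cardintersection(\Delta_\mu(E_1),E_1)=3$ and fairness one rules out every non-empty $\land$-closed subset of $\{\emptyset,\{a\},\{b\},\{c\}\}$ that meets exactly one $K_i$, and the surviving options all contain $\emptyset$; together with $\mod(\Delta^*_\mu(E_2))=\{\emptyset,\{b\}\}$ and $\mod(\Delta^*_\mu(E_1\sqcup E_2))=\{\{b\}\}$ this breaks $(\ic 5)$. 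So commit to the fallback and drop the erroneous first pass.

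As a minor point of comparison, the paper uses a smaller instance for $(\ic 5)$ (only two bases $K_1,K_2$ with $\mod(\Delta_\mu(E_1))=\{\{a\},\{b\}\}$), which makes the case analysis shorter, but your three-base instance works just as well once the computation is fixed. Your remark that the same tables also settle the $\GMax$ case is correct and is exactly what the paper does.
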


\begin{proof}
 The same, or simpler examples as in the proof of the previous proposition will work here.
 We give the proof  in the case of $\Delta^{d,\Sigma}$ where $d$ is a counting distance associated with the function $g$.
It is easy to see that the given examples work as well when using the aggregation function $\GMax$. It can  be observed in the following  that any involved set of models is closed under intersection  and hence it can be represented by a  Horn formula.

$(\ic 5)$: 
Let us consider $E_1=\{K_1,K_2\}$, $E_2=\{K_3\}$ and $\mu$ with 
$\mod(K_1)=  \{\{a\},\{a, b\} \}$,$ \mod(K_2) =  \{\{b\}, \{a,b\}\}$,
$\mod(K_3) =  \{\emptyset,\{b\}\}$ and 
$ \mod(\mu)  = \{\emptyset,\{a\}, \{b\}\}.$
Since $g(1)>0$ by definition of a counting distance, we have $\mod(\Delta_\mu(E_1)) = \{\{a\},\{b\}\}$,
and thus
  $\mod(\Delta^{*}_\mu(E_1)) \subseteq \{\emptyset,\{a\},\{b\}\}$.
We can exclude $\mod(\Delta^{*}_\mu(E_1)) = \{\{a\},\{b\}\}$ since it is not closed under $\wedge$.
  By Definition~\ref{def:fair-ref} we can exclude $\mod(\Delta^{*}_\mu(E_1)) = \{\{a\}\}$ and $\mod(\revdelta_{\mu}(E_1)) = \{\{b\}\}$.
Therefore either $\mod(\Delta^{*}_\mu(E_1)) = \{\emptyset\}$ or $\mod(\Delta^{*}_\mu(E_1)) =\{\emptyset,\{a\},\{b\}\}$.
On the one hand, since
  $\mod(\Delta^{*}_\mu(E_2)) = \{\emptyset,\{b\}\}$, in any case $\mod(\Delta^{*}_\mu(E_1)\land \Delta^{*}_\mu(E_2))$ contains $\emptyset$.  
On the other hand
  $\mod(\Delta^{*}_\mu(E_1\sqcup E_2)) = \{\{b\}\}$.
  This violates postulate $(\ic 5)$.

$(\ic 7)$:
  There we have $\mod(\Delta _{\mu_1\wedge\mu_2}(E))=\{\{a\}\}$.
  By properties 3 and 4 of Definition~\ref{def:ref} it holds $\mod(\revdelta_{\mu_1\wedge\mu_2}(E)) = \{\{a\}\}$.
  Since $\mod(\Delta_{\mu_1}(E))=\{\{a\},\{b\}\}$, it follows that $\mod(\revdelta_{\mu_1}(E)) \subseteq \{\emptyset,\{a\},\{b\}\}$.
  We can exclude $\mod(\revdelta_{\mu_1}(E)) = \{\{a\},\{b\}\}$ since it is not closed under $\wedge$.
  By Definition~\ref{def:fair-ref} we can exclude $\mod(\revdelta_{\mu_1}(E)) = \{\{a\}\}$ and $\mod(\revdelta_{\mu_1}(E)) = \{\{b\}\}$.
  Hence, $\emptyset \in \mod(\revdelta_{\mu_1}(E))$.
  Therefore $\emptyset \in \mod(\revdelta_{\mu_1}(E)) \cap \mod(\mu_2)$ but $\emptyset \not\in \mod(\revdelta_{\mu_1\wedge\mu_2}(E))$ which violates $(\ic 7)$.
\end{proof}

We leave it as an open question whether this proposition can be extended to Krom.
For the two remaining postulates, $(\ic 6)$ and $(\ic 8)$, the situation is even worse, since any refinement of the two kinds of distance-based merging operators we considered violates them in $\Lhorn$ and in $\Lkrom$.

\begin{proposition}\label{prop:IC6_IC8_violated}
Let  $\Delta$ be  a merging operator with $\Delta \in\{\ddelta{\Sigma}, \ddelta{\GMax}\}$, where $d$ is an arbitrary 
counting distance.
  Then any refined operator $\revdelta$ violates postulates $(\ic 6)$ and $(\ic 8)$ in $\Lhorn$ and in $\Lkrom$.
\end{proposition}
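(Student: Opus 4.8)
The plan is to reduce, via Proposition~\ref{prop:characterization}, to reasoning about $\op$-mappings, and then to exhibit explicit Horn (and Krom) merging scenarios on which \emph{every} refinement of $\Delta$ must fail. Since $\ddelta{\Sigma}$ and $\ddelta{\GMax}$ are basic operators (this follows from $(\ic 2)$, as the distance to $\{\top\}$ is $0$ everywhere), an arbitrary $\Delta$-refinement $\revdelta$ for $\L'$ is of the form $\Delta^{\fop}$ for some $\op$-mapping $\fop$, with $\op=\land$ for $\Lhorn$ and $\op=\maj_3$ for $\Lkrom$. By Properties~\ref{property:mapinclusion}--\ref{property:mapemptyset} of Definition~\ref{def:op-mapping}, $\mod(\revdelta_\mu(E))$ is always a non-empty, $\op$-closed subset of $\clop(\mod(\Delta_\mu(E)))$, and it equals $\mod(\Delta_\mu(E))$ whenever the latter is already closed. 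The engine of the argument is thus: build the scenario so that, in the postulate under scrutiny, all $\Delta$-values except one ``pivot'' value $\mod(\Delta_\mu(E^\ast))=\M$ are closed — hence fixed by $\revdelta$ through invariance — while $\M$ is a small set \emph{not} closed under $\op$; then $\revdelta$ is free only in its choice of $\mod(\revdelta_\mu(E^\ast))\subseteq\clop(\M)$, and a finite case distinction over the admissible choices shows that $(\ic 6)$ (resp.\ $(\ic 8)$) collapses.

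For $(\ic 8)$ I would fix $E$ and $\mu_1$ with $\mod(\Delta_{\mu_1}(E))=\M$ non-closed (in the Horn case using the ``straddling'' bases of Example~\ref{ex:1}, e.g.\ $\mod(K_1)=\{\{a\},\{a,b\}\}$, $\mod(K_2)=\{\{b\},\{a,b\}\}$, which make $\{a\},\{b\}$ the unique minimizers for \emph{every} counting distance, enlarging $\mod(\mu_1)$ by auxiliary atoms as needed), and then, after inspecting $S=\mod(\revdelta_{\mu_1}(E))$, pick $\mu_2$ with $\mod(\mu_2)$ closed and of size two so that (i) $\revdelta_{\mu_1}(E)\wedge\mu_2$ is consistent while (ii) $\mod(\Delta_{\mu_1\wedge\mu_2}(E))$ — the minimal set of $\mu_1\wedge\mu_2$ — lands on a closed subset, hence equals $\mod(\revdelta_{\mu_1\wedge\mu_2}(E))$, and is not contained in $S$. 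Two families of $\mu_2$ are needed: one that keeps a closure-added model of $\M$ while dropping every model of $\M$, forcing the tightened merge outside $\clop(\M)$ (this covers the cases where $S$ contains a closure point but not all of $\M$, in particular the closure-like behavior $S=\clop(\M)$); and one that keeps a single model of $\M$ together with a companion chosen so that $\mod(\Delta_{\mu_1\wedge\mu_2}(E))$ is a two-element closed set strictly larger than $S$ (this covers the lex-like behavior where $S$ is a singleton). The choice of $\mu_2$ after examining which models $S$ retains makes the analysis genuinely adaptive. The $(\ic 6)$ argument is the two-profile analogue: fix $E_1$ whose $\mu$-restricted merge is a non-closed $\M$, and choose $E_2$ and $E_1\sqcup E_2$ (using $\{\top\}$-bases and straddling bases so the $\Delta$-computations survive for all $d$) so that $\mod(\Delta_\mu(E_2))$ and $\mod(\Delta_\mu(E_1\sqcup E_2))$ are closed; then $\revdelta_\mu(E_1)\wedge\revdelta_\mu(E_2)$ is forced consistent while $\revdelta_\mu(E_1\sqcup E_2)$ no longer entails it. For $\Lkrom$ the whole scheme is replayed with $\M$ a three-element set not closed under $\maj_3$ (e.g.\ $\{\{a\},\{b\},\{c\}\}$), mirroring the Krom constructions in Proposition~\ref{prop:IC567_violated_for_closure}; and because on all these symmetric instances the $\GMax$ distance vectors are permutations of fixed tuples such as $(g(1),g(1),\dots)$, the computations carried out for $\Sigma$ transfer verbatim to $\GMax$.

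The step I expect to be the real obstacle is making the case analysis uniform in three parameters at once: the refinement $\fop$ (which may behave like closure, like lex, or arbitrarily, and may even react to the multiplicity of the profile), the fragment ($\Lhorn$ versus $\Lkrom$), and — most annoyingly — the counting distance $d$, about which one may assume nothing beyond monotonicity of $g$ and $g(n)=0\Leftrightarrow n=0$, so devices like $g(2)<2g(1)$ are not available. This is precisely why the straddling-base gadget of Example~\ref{ex:1} is indispensable: it pins the pivot set $\M$ for every $g$, and one must then be careful that the auxiliary constraint $\mu_2$ (or extra base in the $(\ic 6)$ case) keeps every relevant $\Delta$-value closed regardless of $g$. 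The remaining work — checking closedness of the auxiliary model sets, recomputing the minimal sets after intersecting with $\mu_2$, and verifying the (in)consistency conditions in each case — is routine and of the same flavor as the computations already carried out in Propositions~\ref{prop:IC567_violated_for_closure} and~\ref{prop:IC567_violated_for_fair_inHorn}.
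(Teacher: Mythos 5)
Your overall strategy is the paper's: reduce via Proposition~\ref{prop:characterization} to operators $\Delta^{\fop}$, exploit that a $\op$-mapping returns a non-empty closed subset of $\clop(\M)$ and is \emph{forced} to return $\M$ whenever $\M$ is closed, and then choose the second argument of the postulate \emph{adaptively}, after inspecting the refinement's value on a non-closed pivot. Your $(\ic 6)$ plan is essentially the paper's proof: the paper takes $\mod(K_1)=\{\{a\},\{a,b\}\}$, $\mod(K_2)=\{\{b\},\{a,b\}\}$, $\mod(K_3)=\{\emptyset,\{a\},\{b\}\}$, $\mod(\mu)=\{\emptyset,\{a\},\{b\},\{a,b\}\}$, gets the non-closed pivot $\M=\{\{a\},\{b\},\{a,b\}\}$, and picks $E_2$ among $\{K_4\}$ (with $\mod(K_4)=\{\emptyset\}$), $\{K_1\}$, $\{K_2\}$ according to which closed subset of $\clop(\M)$ the mapping returns; all auxiliary merge sets are closed, so invariance pins them down, exactly as you describe.

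There is, however, a concrete gap in your $(\ic 8)$ plan. First, adaptivity is weaker than you assume: the mapping depends only on the pair $(\mod(\Delta_{\mu_1}(E)),\mmod(E))$, so any $\mu_2$ for which $\mod(\Delta_{\mu_1\wedge\mu_2}(E))=\mod(\Delta_{\mu_1}(E))$ produces \emph{the same} refined output and can never witness a violation. Second, with the gadget you commit to (the straddling bases of Example~\ref{ex:1}, with $\mod(\mu_1)$ enlarged by auxiliary models), the singleton case $S=\{\{a\}\}$ cannot be defeated in $\Lhorn$: any Horn $\mu_2$ whose models include both $\{a\}$ and $\{b\}$ must also include $\emptyset$, so the tightened merge set is again $\M=\{\{a\},\{b\}\}$ and the mapping returns $S$; whereas any $\mu_2$ containing $\{a\}$ but not $\{b\}$ makes $\{a\}$ the unique minimizer (every other model of $\mu_1$ is strictly farther), so the tightened merge is $\{\{a\}\}\subseteq S$ and $(\ic 8)$ holds on that instance. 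To kill the singleton (lex-like) refinements you need a pivot in which each candidate interpretation $I\in\clop(\M)$ has a $\subseteq$-comparable companion $X\in\mod(\mu_1)$ at \emph{equal} aggregated distance, so that $\mod(\mu_2)=\{I,X\}$ is Horn, keeps consistency with $S$, and yields a closed two-element tightened merge $\not\subseteq\{I\}$ — a property your named gadget lacks and which must be engineered uniformly in $g$. The analogous care is needed in Krom. So the idea is right and matches the paper, but the $(\ic 8)$ construction as described would fail and needs a richer pivot.
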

\begin{proof}
 As an example we give the proof for $(\ic 6)$ in $\Lhorn$ for $\ddelta{\GMax}$.
Since $\Lhorn$ is an $\land$-fragment,
there is an $\land$-mapping $f$  such that $\revdelta=\Delta^f$
	and we have $f(\M,{\cal X})\subseteq \cl{\land}(\M)$ with $\cl{\land}(f(\M,{\cal X}))=f(\M,{\cal X})$.
	Let us consider $E_1=\{K_1,K_2,K_3\}$ and $\mu$ with $\mod(K_1) = \{\{a\},\{a, b\}\}$, $\mod(K_2)=  \{\{b\},\{a, b\}\}$, 
$\mod(K_3) = \{\emptyset,\{a\}, \{b\}\}$ and $\mod(\mu)  =  \{\emptyset,\{a\}, \{b\},\{a,b\}\}.$
\begin{tabular}{l | l | l | l | l}
    & $K_1$ & $K_2$ & $K_3$ &  $E_1$\\
    \hline
    $\emptyset$ & $g(1)$ & $g(1)$ & $0$ & $(g(1), g(1), 0)$\\
    $\{a\}$     & $0$ & $g(1)$ & $0$   & $(g(1), 0, 0)$\\
    $\{b\}$     & $g(1)$ & $0$ & $0$ & $(g(1), 0, 0)$\\
    $\{a,b\}$     & $0$ & $0$ & $g(1)$ & $(g(1), 0, 0)$\\
    \hline
  \end{tabular}

\noindent
We have $\M=\mod(\Delta_{\mu}(E_1))=\{\{a\},\{b\},\{a,b\}\}$. 
Let us consider the possibilities for $\mod(\revdelta_{\mu}(E_1))=f(\M,\mmod(E_1))$.
If $\emptyset\in f(\M,\mmod(E_1))$, then let
 $E_2=\{K_4\}$ with $K_4$ in $\Lhorn$ be such that $\mod(K_4)=\{\emptyset\}$. 
Thus, $\mod(\revdelta_{\mu}(E_2)) = \{\emptyset\}$ and $\mod(\revdelta_{\mu}(E_1) \wedge \revdelta_{\mu}(E_2)) = \{\emptyset\}$.
Moreover, $\mod(\Delta_{\mu}(E_1\sqcup E_2))= \{\emptyset,\{a\},\{b\}\}$ or $\{\emptyset,\{a\},\{b\},\{a,b\}\}$ depending on whether $g(1) < g(2)$ or $g(1) = g(2)$. Since both sets are closed under intersection, we have $\mod(\revdelta_{\mu}(E_1\sqcup E_2))=\mod(\Delta_{\mu}(E_1\sqcup E_2))$.
Thus $\mod(\revdelta_{\mu}(E_1\sqcup E_2)) \not\subseteq \{\emptyset\}$ and $(\ic 6)$ does not hold.

Otherwise, $f(\M,\mmod(E_1))\subseteq \{\{a\},\{b\},\{a,b\}\}$. 
By symmetry assume w.l.o.g. that $f(\M,\mmod(E_1))\subseteq \{\{a, b\},\{a\}\}$ (note that $\{\{a\},\{b\}\}\subseteq f(\M,\mmod(E_1))$ would imply
$\emptyset\in f(\M,\mmod(E_1))$).
If $f(\M,\mmod(E_1))=\{\{a\}\}$ or $\{\{a,b\}\}$, then let $E_2=\{K_1\}$. 
Then,  $\mod(\Delta_{\mu}(E_2))=\{\{a\}, \{a,b\}\}= \mod(\revdelta_{\mu}(E_2))$, 
and  $\mod(\revdelta_{\mu}(E_1) \wedge \revdelta_{\mu}(E_2)) = \{\{a\}\}$ or $\{\{a,b\}\}$.
Furthermore, $\mod(\Delta_{\mu}(E_1\sqcup E_2))= \{\{a\},\{a,b\}\}= \mod(\revdelta_{\mu}(E_1\sqcup E_2))$, thus violating $(\ic 6)$.
If $f(\M,\mmod(E_1))=\{\{a, b\},\{a\}\}$, then let $E_2=\{K_2\}$. Then,  $\mod(\Delta_{\mu}(E_2))=\{\{b\}, \{a,b\}\}= \mod(\revdelta_{\mu}(E_2))$, 
and  $\mod(\revdelta_{\mu}(E_1) \wedge \revdelta_{\mu}(E_2)) = \{\{a,b\}\}$. 
Furthermore, $\mod(\Delta_{\mu}(E_1\sqcup E_2))= \{\{b\},\{a,b\}\}=\mod(\revdelta_{\mu}(E_1\sqcup E_2))$,  and  thus $(\ic 6)$ does not hold.
\end{proof}

\section{Conclusion}
We have investigated to which extent known merging operators can be refined to work within propositional fragments. Compared to revision, this task is more involved since merging operators have many  parameters that have to be  taken into account, and the field of investigation is very broad.

We have first defined desired properties any refined merging operator should satisfy and  provided a  characterization of all refined merging operators. 
We have shown that the refined merging operators preserve the basic merging postulates, namely $(\ic 0)$--$(\ic 3)$. The situation is more complex for the other postulates. For the postulate $(\ic 4)$ we have provided a sufficient condition for its preservation  by a refinement (fairness). We have shown that this condition is not necessary and it would be interesting to study how to weaken it in order to get a necessary and sufficient condition. For the other postulates, we have focused on two representative families of distance-based merging operators that satisfy the postulates $(\ic 0)$--$(\ic 8)$. For these two families the preservation of the postulates $(\ic 5)$ and $(\ic 7)$  depends on the used refinement and it would be interesting to obtain a necessary and sufficient condition for this. In contrast, there is no hope for such a condition for $(\ic 6)$ and $(\ic 8)$, since we have shown that any refinement of merging operators belonging to these families violates these postulates. 

As future work we are interested in solving the open question of whether Proposition~\ref{prop:IC567_violated_for_fair_inHorn} can be extended to the Krom fragment or whether there exists a fair refinement for Krom which satisfies $(\ic 5)$ or $(\ic 7)$.
We also plan a thorough investigation of the complexity of  refined merging operators.

\section{Acknowledgments}
This work has been supported 
by PHC Amadeus  project No 29144UC (OeAD FR 12/2013),
by the Austrian Science Fund (FWF): P25521, and by the Agence Nationale de la Recherche,
ASPIQ project
 ANR-12-BS02-0003.


\begin{thebibliography}{}

\bibitem[\protect\citeauthoryear{Alchourr{\'o}n, G{\"a}rdenfors, and
  Makinson}{1985}]{AGM}
Alchourr{\'o}n, C.; G{\"a}rdenfors, P.; and Makinson, D.
\newblock 1985.
\newblock On the logic of theory change: Partial meet contraction and revision
  functions.
\newblock {\em J. Symb. Log.} 50(2):510--530.

\bibitem[\protect\citeauthoryear{Baral, Kraus, and Minker}{1991}]{BKM91}
Baral, C.; Kraus, S.; and Minker, J.
\newblock 1991.
\newblock Combining multiple knowledge bases.
\newblock {\em IEEE Trans. Knowl. Data Eng.} 3(2):208--220.

\bibitem[\protect\citeauthoryear{Bloch and (Eds)}{2001}]{IJIS-01}
Bloch, I., and (Eds), A.~H.
\newblock 2001.
\newblock Fusion: General concepts and characteristics.
\newblock {\em Int. J. Intell. Syst.} 16(10):1107--1134.

\bibitem[\protect\citeauthoryear{Booth \bgroup et al\mbox.\egroup
  }{2011}]{BMVW11}
Booth, R.; Meyer, T.; Varzinczak, I.; and Wassermann, R.
\newblock 2011.
\newblock On the link between partial meet, kernel, and infra contraction and
  its application to {H}orn logic.
\newblock {\em J. Artif. Intell. Res.} 42:31--53.

\bibitem[\protect\citeauthoryear{Chac{\'o}n and {Pino
  P{\'e}rez}}{2012}]{ChaconP12}
Chac{\'o}n, J., and {Pino P{\'e}rez}, R.
\newblock 2012.
\newblock Exploring the rationality of some syntactic merging operators.
\newblock In {\em Proc.\ IBERAMIA}, volume 7637 of {\em Lecture Notes in
  Computer Science},  21--30.
\newblock Springer.

\bibitem[\protect\citeauthoryear{Cholvy}{1998}]{LC98}
Cholvy, L.
\newblock 1998.
\newblock Reasoning about merging information.
\newblock {\em Handbook of DRUMS} 3:233--263.

\bibitem[\protect\citeauthoryear{Creignou \bgroup et al\mbox.\egroup
  }{2014}]{CPPW12}
Creignou, N.; Papini, O.; Pichler, R.; and Woltran, S.
\newblock 2014.
\newblock Belief revision within fragments of propositional logic.
\newblock {\em J. Comput. Syst. Sci.} 80(2):427--449.
\newblock (Preliminary version in {Proc.\ KR}, 2012).

\bibitem[\protect\citeauthoryear{Delgrande and Peppas}{2011}]{DP11}
Delgrande, J., and Peppas, P.
\newblock 2011.
\newblock Revising {H}orn theories.
\newblock In {\em Proc.\ IJCAI},  839--844.

\bibitem[\protect\citeauthoryear{Delgrande and Wassermann}{2013}]{DW13}
Delgrande, J., and Wassermann, R.
\newblock 2013.
\newblock Horn clause contraction functions.
\newblock {\em J. Artif. Intell. Res.} 48:475--511.

\bibitem[\protect\citeauthoryear{Konieczny and {Pino
  P{\'e}rez}}{2002}]{KP02-JLC}
Konieczny, S., and {Pino P{\'e}rez}, R.
\newblock 2002.
\newblock Merging information under constraints: A logical framework.
\newblock {\em J. Log. Comput.} 12(5):773--808.

\bibitem[\protect\citeauthoryear{Konieczny and {Pino P{\'e}rez}}{2011}]{KP11}
Konieczny, S., and {Pino P{\'e}rez}, R.
\newblock 2011.
\newblock Logic based merging.
\newblock {\em J. Philosophical Logic} 40(2):239--270.

\bibitem[\protect\citeauthoryear{Konieczny, Lang, and Marquis}{2004}]{KLM04}
Konieczny, S.; Lang, J.; and Marquis, P.
\newblock 2004.
\newblock {DA}$^2$ merging operators.
\newblock {\em Artif. Intell.} 157(1-2):49--79.

\bibitem[\protect\citeauthoryear{Lin and Mendelzon}{1998}]{LM98}
Lin, J., and Mendelzon, A.
\newblock 1998.
\newblock {M}erging databases under constraints.
\newblock {\em Int. J. Cooperative Inf. Syst.} 7(1):55--76.

\bibitem[\protect\citeauthoryear{Lin}{1996}]{Lin96}
Lin, J.
\newblock 1996.
\newblock Integration of weighted knowledge bases.
\newblock {\em Artif. Intell.} 83(2):363--378.

\bibitem[\protect\citeauthoryear{Putte}{2013}]{Putte13}
Putte, F. V.~D.
\newblock 2013.
\newblock Prime implicates and relevant belief revision.
\newblock {\em J. Log. Comput.} 23(1):109--119.

\bibitem[\protect\citeauthoryear{Revesz}{1993}]{Rev93}
Revesz, P.
\newblock 1993.
\newblock On the semantics of theory change: Arbitration between old and new
  information.
\newblock In {\em Proc.\ PODS},  71--82.

\bibitem[\protect\citeauthoryear{Revesz}{1997}]{Rev97}
Revesz, P.
\newblock 1997.
\newblock On the semantics of arbitration.
\newblock {\em IJAC} 7(2):133--160.

\bibitem[\protect\citeauthoryear{Schaefer}{1978}]{Schaefer78}
Schaefer, T.
\newblock 1978.
\newblock The complexity of satisfiability problems.
\newblock In {\em Proc.\ STOC},  216--226.

\bibitem[\protect\citeauthoryear{Zhuang and Pagnucco}{2012}]{ZP12}
Zhuang, Z., and Pagnucco, M.
\newblock 2012.
\newblock Model based {H}orn contraction.
\newblock In {\em Proc.\ KR},  169--178.

\bibitem[\protect\citeauthoryear{Zhuang, Pagnucco, and Zhang}{2013}]{ZPZ13}
Zhuang, Z.; Pagnucco, M.; and Zhang, Y.
\newblock 2013.
\newblock Definability of {H}orn revision from {H}orn contraction.
\newblock In {\em Proc.\ IJCAI}.

\end{thebibliography}

\end{document}